\theoremstyle{plain}
\newtheorem{theorem}{Theorem}%
\newtheorem{lemma}{Lemma}
\newtheorem{corollary}{Corollary}
\newtheorem{assumption}{Assumption}
\newtheorem{definition}{Definition}
\newtheorem{claim}{Claim}
\newtheorem{fact}{Fact}
\def\letters{a,b,c,d,e,f,g,h,i,j,k,l,m,n,o,p,q,r,s,t,u,v,w,x,y,z}
\def\Letters{A,B,C,D,E,F,G,H,I,J,K,L,M,N,O,P,Q,R,S,T,U,V,W,X,Y,Z}
\def\greekletters{alpha,beta,gamma,Gamma,lambda,Lambda,delta,Delta,zeta,kappa,phi,psi,nu,eta,varsigma,varphi,xi}
\edef\AllLetters{\letters,\Letters}
\def\DefineShortCuts#1#2#3#4#5{%
    \def\DefineShort##1;{\expandafter \providecommand \csname #2\string##1#3\endcsname {{#4{#5{##1}}}}}
    \@for\next:=#1\do{\expandafter\DefineShort\next;}}
\def\DefineShortCutsGreek#1#2#3#4#5{%
    \def\DefineShort##1;{\expandafter \providecommand \csname #2\string##1#3\endcsname {{#4{#5{\csname ##1\endcsname}}}}}
    \@for\next:=#1\do{\expandafter\DefineShort\next;}}
\DefineShortCuts{\AllLetters}{b}{}{\bm}{}
\DefineShortCuts{\AllLetters}{bb}{}{\overline}{\bm}
\DefineShortCuts{\Letters}{c}{}{\mathcal}{}
\DefineShortCuts{\Letters}{}{b}{\mathbb}{\bm}
\DefineShortCutsGreek{\greekletters}{b}{}{\bm}{}
\DefineShortCutsGreek{\greekletters}{bb}{}{\overline}{\bm}
\newcommand\x\bx
\newcommand\y\by
\newcommand\E\Eb
\newcommand\R\Rb
\newcommand\I\Ib
\renewcommand\P\Pb %
\newcommand{\onet}{\bm{1}_n \otimes}
\newcommand{\ave}{\Big(\frac{1}{n} \bm{1}_n^\top \otimes \bI_d\Big)}
\newcommand{\mean}{( \frac1n \bm 1_n \bm 1_n^\top) \otimes \bI_d}
\def \myalg {{DESTRESS}\xspace}
\newcommand{\KO}{K_{\mathsf{out}}}
\newcommand{\KI}{K_{\mathsf{in}}}
\newcommand{\W}{(\bW \otimes \bI_d)}
\newcommand{\WKO}{(\bW_{\mathsf{out}} \otimes \bI_d)}
\newcommand{\WKI}{(\bW_{\mathsf{in}} \otimes \bI_d)}
\newcommand{\alphai}{\alpha_{\mathsf{in}}}
\newcommand{\alphao}{\alpha_{\mathsf{out}}}
\newcommand{\bGI}{\bG_{\mathsf{in}}}
\newcommand{\bGO}{\bG_{\mathsf{out}}}
\newcommand{\xout}{\x^{\mathsf{output}}}
\definecolor{yjc}{RGB}{255,0,255}
\newcommand{\yc}[1]{\textcolor{yjc}{[Yuejie: #1]}}
\title{DESTRESS: Computation-Optimal and Communication-Efficient Decentralized Nonconvex Finite-Sum Optimization}
\author{
	Boyue Li\thanks{Department of Electrical and Computer Engineering, Carnegie Mellon University, Pittsburgh, PA 15213, USA; Emails:
		\texttt{\{boyuel,yuejiec\}@andrew.cmu.edu}.} \\
		CMU \\ 
		\and
    Zhize Li\thanks{Computer, Electrical and Mathematical Sciences and Engineering Division, King Abdullah University of Science and Technology, Thuwal 23955-6900, Kingdom of Saudi Arabia; Email: \texttt{zhize.li@kaust.edu.sa}.} \\
		KAUST \\
		\and
	Yuejie Chi\footnotemark[1] \\
	CMU
}
\date{October 2021; Revised November 2021}
\begin{document}

\maketitle

\begin{abstract}
 
Emerging applications in multi-agent environments such as internet-of-things, networked sensing, autonomous systems and federated learning, call for decentralized algorithms for finite-sum optimizations that are resource-efficient in terms of both computation and communication. In this paper, we consider the prototypical setting where the agents work collaboratively to minimize the sum of local loss functions by only communicating with their neighbors over a predetermined network topology. 
We develop a new algorithm, called \underline{DE}centralized \underline{ST}ochastic \underline{RE}cur\underline{S}ive gradient method\underline{S} (\myalg) for nonconvex finite-sum optimization, which matches the optimal incremental first-order oracle (IFO) complexity of centralized algorithms for finding first-order stationary points, while maintaining communication efficiency. Detailed theoretical and numerical comparisons corroborate that the resource efficiencies of \myalg improve upon prior decentralized algorithms over a wide range of parameter regimes. 
\myalg leverages several key algorithm design ideas including randomly activated stochastic recursive gradient updates with mini-batches for local computation, gradient tracking with extra mixing (i.e., multiple gossiping rounds) for per-iteration communication, together with careful choices of hyper-parameters and new analysis frameworks to provably achieve a desirable computation-communication trade-off. 
\end{abstract}

\medskip
\noindent\textbf{Keywords:} decentralized optimization, nonconvex finite-sum optimization, stochastic recursive gradient methods

\section{Introduction}%
\label{sec:introduction}

The proliferation of multi-agent environments in emerging applications such as internet-of-things (IoT), networked sensing and autonomous systems, together with the necessity of training machine learning models using distributed systems in federated learning, leads to a growing need of developing decentralized algorithms for optimizing finite-sum problems. Specifically, the goal is to minimize the global objective function:
\begin{equation}
    \underset{{\x\,\in\, \mathbb{R}^d}}{\text{minimize}} \quad f(\x) : = \frac{1}{N} \sum_{\bm{z}\in\cM} \ell(\x; \bz),
\end{equation}
where $\bx \in \R^d$ denotes the parameter of interest, $\ell(\x; \bz)$ denotes the sample loss of the sample $\bz$, $\cM$ denotes the entire dataset, and $N = |\cM|$ denotes the number of data samples in the entire dataset. Of particular interest to this paper is the nonconvex setting, where $\ell(\x; \bz)$ is nonconvex with respect to $\x$, due to its ubiquity across problems in machine learning and signal processing, including but not limited to nonlinear estimation, neural network training, and so on. 

In a prototypical decentralized environment, however, each agent only has access to a disjoint subset of the data samples, and aims to work collaboratively to optimize $f(\x)$, by only exchanging information with its neighbors over a predetermined network topology. Assuming the data are distributed equally among all agents,\footnote{It is straightforward to generalize to the unequal splitting case with a proper reweighting.} each agent thus possesses $m  := N / n$ samples, and $f(\x)$ can be rewritten as
\begin{equation*}
    f(\x) =  \frac{1}{n}\sum_{i=1}^n f_i(\x) ,
\end{equation*}
where 
\begin{equation*}%
f_i(\x): = \frac{1}{m} \sum_{\bz \in \cM_i}\ell(\x; \bz)  
\end{equation*}
denotes the local objective function averaged over the local dataset $\cM_i$ at the $i$th agent ($1\leq i\leq n$) and $\cM = \cup_{i=1}^n \cM_i$. The communication pattern of the agents is specified via an undirected graph $\mathcal{G}=(\mathcal{V}, \mathcal{E})$,  where $\mathcal{V}$ denotes the set of all agents, and two agents can exchange information if and only if there is an edge in $\mathcal{E}$ connecting them. Unlike the server/client setting, the decentralized setting, sometimes also called the network setting, does not admit a parameter server to facilitate global information sharing, therefore is much more challenging to understand and delineate the impact of the network graph.

Roughly speaking, in a typical decentralized algorithm, the agents alternate between (1)
communication, which propagates local information and enforces consensus, and (2)
computation, which updates individual parameter estimates and improves convergence using information
received from the neighbors. The resource efficiency of a decentralized algorithm can often be measured in terms of its computation complexity and communication complexity.
For example,  communication can be extremely time-consuming and become the
top priority when the bandwidth is limited. On the other hand, minimizing computation, especially at resource-constrained 
agents (e.g., power-hungry IoT or mobile devices), is also critical to ensure the overall efficiency.
Achieving a desired level of resource efficiency for a
decentralized algorithm often requires careful and delicate trade-offs between computation and communication, as these  objectives are often conflicting in nature.

\subsection{Our contributions}
\label{sub:contributions}

The central contribution of this paper lies in the development of a new resource-efficient algorithm for nonconvex finite-sum optimization problems in a decentralized environment, dubbed \underline{DE}centralized \underline{ST}ochastic \underline{RE}cur\underline{S}ive gradient method\underline{S} (\myalg). \myalg provably finds first-order stationary points of the global objective function $f(\x)$ with the optimal incremental first-order (IFO) oracle complexity, i.e. the complexity of evaluating sample gradients, matching state-of-the-art centralized algorithms, but at a much lower communication complexity compared to existing decentralized algorithms over a wide range of parameter regimes. 

To achieve resource efficiency, \myalg leverages several key ideas in the algorithm design. To reduce local computation, \myalg harnesses the finite-sum structure of the empirical risk function by performing stochastic variance-reduced recursive gradient updates \cite{nguyen2019finite,fang2018spider,wang2019spiderboost,li2019ssrgd,li2021zerosarah,li2021page,zhou2020stochastic}---an approach that is shown to be optimal in terms of IFO complexity in the centralized setting---in a randomly activated manner to further improve computational efficiency when the local sample size is limited.
To reduce communication, \myalg employs gradient tracking \cite{zhu2010discrete} with a few mixing rounds per iteration, which helps accelerate the convergence through better information sharing \cite{li2020communication}; the extra mixing scheme can be implemented using Chebyshev acceleration \cite{arioli2014chebyshev} to further improve the communication efficiency. In a nutshell, to find an $\epsilon$-approximate first-order stationary points, i.e. $ \E \big\| \nabla f(\xout) \big\|^2_2\leq \epsilon$, where $\xout$ is the output of \myalg, and the expectation  is taken with respect to the randomness of the algorithm, \myalg requires:
\begin{itemize}
\item $O \big(m + (m/n)^{1/2} L / \epsilon \big)$ per-agent IFO calls,\footnote{The big-$O$ notation is defined in \Cref{sub:paper_organization_and_notation}.} which is {\em network-independent}; and
 
\item $O\Big(\frac{\log \big((n/m)^{1/2} + 2 \big)}{(1 - \alpha)^{1/2}} \cdot \big( (mn)^{1/2} + \frac {L}{\epsilon} \big) \Big)$ rounds of communication, 
\end{itemize}
where $L$ is the smoothness parameter of the sample loss, $\alpha \in [0,1) $ is the mixing rate of the network topology, $n$ is the number of agents, and $m=N/n$ is the local sample size.

\begin{table}[tb]
    \centering
    \resizebox{\textwidth}{!}{%
\begin{tabular}{c||c|c|c}
    \toprule

    \textcolor{rgb:red,255;green,0;blue,0}{}
    Algorithms &Setting &  Per-agent IFO Complexity
    & Communication Rounds
    \\

    \hline\hline

    SVRG
    &  \multirow{2}{*}{centralized} & \multirow{2}{*}{$N + \frac{N^{2/3} L}{\epsilon}$}
    & \multirow{2}{*}{n/a}
    \\
    \cite{allen2016variance,reddi2016stochastic}    & &  & \\
    \hline
    SCSG/SVRG+
    &  \multirow{2}{*}{centralized} & \multirow{2}{*}{$N + \frac{N^{2/3} L}{\epsilon}$}
    & \multirow{2}{*}{n/a}
    \\
     \cite{lei2019non,li2018simple}  & &  & \\
    \hline

    SNVRG
    &  \multirow{2}{*}{centralized} &  \multirow{2}{*}{$N + \frac{N^{1/2} L}{\epsilon}$} %
    &  \multirow{2}{*}{n/a} %
    \\
    \cite{zhou2020stochastic} & &  & \\
    \hline
    SARAH/SPIDER/SpiderBoost 
    &  \multirow{2}{*}{centralized} &  \multirow{2}{*}{$N + \frac{N^{1/2} L}{\epsilon}$} %
    &  \multirow{2}{*}{n/a} %
    \\
    \cite{nguyen2019finite,fang2018spider,wang2019spiderboost} & &  & \\
    \hline

    SSRGD/ZeroSARAH/PAGE 
    &  \multirow{2}{*}{centralized} &  \multirow{2}{*}{$N + \frac{N^{1/2} L}{\epsilon}$} %
    &  \multirow{2}{*}{n/a} %
    \\
    \cite{li2019ssrgd,li2021zerosarah,li2021page} & &  & \\
    \hline

    D-GET 
    & \multirow{2}{*}{decentralized} & \multirow{2}{*}{$m + \frac{1}{(1 - \alpha)^2} \cdot \frac{m^{1/2} L}{\epsilon}$}
    & \multirow{2}{*}{Same as IFO} \\
    \cite{sun2020improving} & & & \\
    \hline

    GT-SARAH
    & \multirow{2}{*}{decentralized} & \multirow{2}{*}{$m + \max \Big( \frac{1}{(1 - \alpha)^2}, \big(\frac mn \big)^{1/2}, \frac{(m/n + 1)^{1/3}}{1 - \alpha} \Big) \cdot \frac{L}{\epsilon}$}
    & \multirow{2}{*}{ Same as IFO }
    \\
     \cite{xin2020near} & & & \\
    \hline

    \hline
    \myalg 
    & \multirow{2}{*}{decentralized} & \multirow{2}{*}{ $m + \frac{(m/n)^{1/2} L}{\epsilon}$} 
    & \multirow{2}{*}{$\frac{1}{(1 - \alpha)^{1/2}} \cdot \Big( (mn)^{1/2} + \frac {L}{\epsilon} \Big)$}
    \\
(this paper) & & & \\
    \bottomrule
    \hline

\end{tabular}
}
    \caption{The per-agent IFO complexities and communication complexities to find $\epsilon$-approximate first-order stationary points by stochastic variance-reduced algorithms for nonconvex finite-sum problems. 
        The algorithms listed in the first three rows are designed for the centralized setting,
        and the remaining D-GET, GT-SARAH and our \myalg are in the decentralized setting.
        Here, $n$ is the number of agents, $m=N/n$ is the local sample size, $L$ is the smoothness parameter of the sample loss, and $\alpha \in [0,1) $ is the mixing rate of the network topology.      The big-$O$ notations and logarithmic terms are omitted for simplicity.
    \label{table:1st}
    }
\end{table}

\paragraph{Comparisons with existing algorithms.}
 \Cref{table:1st} summarizes the convergence guarantees of representative stochastic variance-reduced algorithms for finding first-order stationary points across centralized and decentralized communication settings. 
\begin{itemize}
\item In terms of the computation complexity, the overall IFO complexity of \myalg---when summed over all agents---becomes
$$ n \cdot O \big(m + (m/n)^{1/2} L / \epsilon \big) =  O \big(mn + (mn)^{1/2} L/ \epsilon \big)= O \big( N + N^{1/2} L/ \epsilon \big) ,$$
matching the optimal IFO complexity of centralized algorithms (e.g., SPIDER~\cite{fang2018spider}, PAGE~\cite{li2021page}) and distributed server/client algorithms (e.g., D-ZeroSARAH~\cite{li2021zerosarah}).
However, the state-of-the-art
decentralized algorithm GT-SARAH \cite{xin2020near} still did not achieve this optimal IFO complexity for most situations (see \Cref{table:1st}).
To the best of our knowledge, \myalg is the first algorithm to achieve the optimal IFO complexity for the decentralized setting regardless of network topology and sample size.

\item When it comes to the communication complexity, it is observed that the communication rounds of \myalg can be decomposed into the sum of an $\epsilon$-independent term and an $\epsilon$-dependent term (up to a logarithmic factor), i.e.,
    $$  \underbrace{ \frac{1}{(1 - \alpha)^{1/2}} \cdot (mn)^{1/2}  }_{\epsilon-\mathsf{independent}}  + \underbrace{ \frac{1}{(1 - \alpha)^{1/2}}   \cdot \frac{L}{\epsilon}}_{\epsilon-\mathsf{dependent}} ;  $$
similar decompositions also apply to competing decentralized algorithms. 
\myalg significantly improves the $\epsilon$-dependent term of D-GET and GT-SARAH by at least a factor of $\frac{1}{(1-\alpha)^{3/2}}$, and therefore, saves more communications over poorly-connected networks. Further, the $\epsilon$-independent term of \myalg is also smaller than that of D-GET/GT-SARAH as long as the local sample size is sufficiently large, i.e. $m = \Omega \big( \frac{n}{1-\alpha} \big) $,
which also holds for a wide variety of application scenarios. To gain further insights in terms of the communication savings of \myalg,
\Cref{table:1st_communication} further compares the communication complexities of decentralized algorithms for finding first-order stationary points under three common network settings.

\end{itemize}

\begin{table}[tb]
    \centering
    \resizebox{\textwidth}{!}{%

\begin{tabular}{c||c|c|c}
	
	\toprule

	& Erd\H{o}s-R\'enyi graph
	& 2-D grid graph
	& Path graph
	\\

	\hline
	\hline

	$1 - \alpha$
	& \multirow{2}{*}{$1$}
	& \multirow{2}{*}{$\frac{1}{n \log n}$}
	& \multirow{2}{*}{$\frac{1}{n^2}$}
	\\
	(spectral gap)
	& 
	& 
	& 
	\\
	
	\hline
	
	D-GET
	& \multirow{2}{*}{$m + \frac{m^{1/2} L}{\epsilon}$}
    & \multirow{2}{*}{$m + \frac{m^{1/2} n^2 L}{\epsilon}$}
	& \multirow{2}{*}{$m + \frac{m^{1/2} n^4L}{\epsilon}$}
	\\
	
	\cite{sun2020improving}
	& 
	& 
	& 
	\\
	
	\hline
	
	GT-SARAH
	& \multirow{2}{*}{$m + \max \Big\{1,~ \big(\frac{m}{n} \big)^{1/3},~ \big(\frac mn \big)^{1/2} \Big\} \cdot \frac{L}{\epsilon}$}
	& \multirow{2}{*}{$m + \max \Big\{n^2,~ m^{1/3}n^{2/3},~ \big(\frac mn \big)^{1/2} \Big\} \cdot \frac{L}{\epsilon}$}
	& \multirow{2}{*}{$m + \max \Big\{n^4,~ m^{1/3}n^{5/3},~ \big(\frac mn \big)^{1/2} \Big\} \cdot \frac{L}{\epsilon}$}
	\\
	
	\cite{xin2020near}
	& 
	& 
	& 
	\\
	
	\hline
	
	\myalg
	& \multirow{2}{*}{$(mn)^{1/2} + \frac {L}{\epsilon}$}
    & \multirow{2}{*}{$m^{1/2}n  + \frac {n^{1/2} L}{\epsilon} $}
	& \multirow{2}{*}{$(m n^3)^{1/2}  + \frac {nL}{\epsilon} $}
	\\
	
	(this paper)
	& 
	& 
	& 
	\\
	
	\hline 
	\hline 
	
 Improvement factors   
    & \multirow{2}{*}{$\big(\frac mn \big)^{1/2}$   }
    & \multirow{2}{*}{$\frac{m^{1/2}}{n}$   }
    & \multirow{2}{*}{$\frac{m^{1/2}}{n^{3/2}}$    }
	\\
	for $\epsilon$-independent term
	& 
	& 
	& 
	\\

	\hline
	
	Improvement factors
	& \multirow{2}{*}{$\max \Big\{1,~ \big(\frac{m}{n} \big)^{1/3},~\big(\frac mn \big)^{1/2} \Big\}$ }
    & \multirow{2}{*}{$\max \Big\{ n^{3/2},~ m^{1/3}n^{1/6},~ \frac{m^{1/2}}{n} \Big\}$}
	& \multirow{2}{*}{$\max \Big\{ n^3,~ m^{1/3}n^{2/3},~ \frac{m^{1/2}}{n^{3/2}}\Big\}$}
	\\
	for $\epsilon$-dependent term
	& 
	& 
	& 
	\\
	
	\bottomrule
	
	\hline
	
\end{tabular}
}
    \caption{Detailed comparisons of the communication complexities of D-GET, GT-SARAH and \myalg under three graph topologies, where the last two rows delineate the improve factors of \myalg over existing algorithms. The communication savings become significant especially when $m = \Omega \big( \frac{n}{1-\alpha} \big) $.
        The complexities are simplified by plugging the bound on the spectral gap $1-\alpha$ from \cite[Proposition 5]{nedic2018network}. 
        Here, $n$ is the number of agents, $m=N/n$ is the local sample size, $L$ is the smoothness parameter of the sample loss, and $\alpha \in [0,1) $ is the mixing rate of the network topology.
        The big-$O$ notations and logarithmic terms are omitted for simplicity.
    \label{table:1st_communication}
    }
\end{table}
In sum, \myalg harnesses the ideas of variance reduction, gradient tracking and extra mixing
in a sophisticated manner to achieve a scalable decentralized algorithm for nonconvex empirical risk minimization that is competitive in both computation and communication over existing approaches.

\subsection{Additional related works}%
\label{sub:related_works}

Decentralized optimization and learning have been studied extensively, with contemporary emphasis on the capabilities to scale gracefully to large-scale problems --- both in terms of the size of the data and the size of the network.
For the conciseness of the paper, we focus our discussions on the most relevant literature and refer interested readers to recent overviews \cite{nokleby2020scaling,xin2020general,xin2020decentralized} for further references.

\paragraph{Stochastic variance-reduced methods.} Many variants of stochastic variance-reduced gradient based methods have been proposed for finite-sum optimization for finding first-order stationary points, including but not limited to
SVRG \cite{Johnson2013,allen2016variance,reddi2016stochastic}, SCSG \cite{lei2019non}, SVRG+ \cite{li2018simple}, SAGA \cite{defazio2014saga},
SARAH \cite{nguyen2017sarah,nguyen2019finite}, SPIDER \cite{fang2018spider}, SpiderBoost \cite{wang2019spiderboost}, SSRGD \cite{li2019ssrgd}, ZeroSARAH \cite{li2021zerosarah} and PAGE \cite{li2021page,li2021short}.
SVRG/SVRG+/SCSG/SAGA utilize stochastic variance-reduced gradients as a corrected estimator of the full gradient, but can only achieve a sub-optimal IFO complexity of $O(N + N^{2/3} L / \epsilon)$. Other algorithms such as SARAH, SPIDER, SpiderBoost, SSRGD and PAGE adopt stochastic recursive gradients to improve the IFO complexity to $O(N + N^{1/2} L / \epsilon)$, which is optimal indicated by the lower bound provided in \cite{fang2018spider,li2021page}.
\myalg also utilizes the stochastic recursive gradients to perform variance reduction,
which results in the optimal IFO complexity for finding first-order stationary points.

\paragraph{Decentralized stochastic nonconvex optimization.}  There has been a flurry of recent activities in decentralized nonconvex optimization in both the server/client  setting and the network setting. In the server/client setting, \cite{cen2019convergence} simplifies the approaches in \cite{lee2017distributed} for distributing stochastic variance-reduced algorithms without requiring sampling extra data. In particular, D-SARAH \cite{cen2019convergence} extends SARAH to the server/client setting but with a slightly worse IFO complexity and a sample-independent communication complexity. 
D-ZeroSARAH \cite{li2021zerosarah} obtains the optimal IFO complexity in the server/client setting.
In the network setting, D-PSGD \cite{lian2017can} and SGP \cite{assran2019stochastic} extend stochastic gradient descent (SGD) to solve the nonconvex decentralized expectation minimization problems with sub-optimal rates.
However, due to the noisy stochastic gradients,
D-PSGD can only use diminishing step size to ensure convergence, and
SGP uses a small step size on the order of $1/K$, where $K$ denotes the total iterations.
$D^2$ \cite{tang2018d2} introduces a variance-reduced correction term to D-PSGD,
which allows a constant step size and hence reaches a better convergence rate.

Gradient tracking \cite{zhu2010discrete,qu2018harnessing} provides a systematic approach to estimate the global gradient at each agent,
which allows one to easily design decentralized optimization algorithms based on existing centralized algorithms. This idea is applied in \cite{zhang2020decentralized} to extend SGD to the decentralized setting, and in \cite{li2020communication} to extend quasi-Newton algorithms as well as stochastic variance-reduced algorithms, with performance guarantees for optimizing strongly convex functions.
GT-SAGA \cite{xin2020fast} further uses SAGA-style updates and reaches a convergence rate that matches SAGA \cite{defazio2014saga,reddi2016fasta}.
However, GT-SAGA requires to store a variable table, which leads to a high memory complexity.
D-GET \cite{sun2020improving} and GT-SARAH \cite{xin2020near} adopt equivalent recursive local gradient estimators to enable the use of constant step sizes without extra memory usage. The IFO complexity of GT-SARAH is optimal in the restrictive range $m\gtrsim \frac{n}{(1-\alpha)^6}$, while \myalg achieves the optimal IFO over all parameter regimes.  

In addition to variance reduction techniques,
performing multiple mixing steps between local updates can greatly improve the dependence of the network in convergence rates,
which is equivalent of communicating over a better-connected communication graph for the agents, which in turn leads to a faster convergence (and a better overall efficiency) due to better information mixing.
This technique is applied by a number of recent literature including \cite{berahas2018balancing,pan2019d,berahas2020convergence,li2020communication,hashemi2020benefits,iakovidou2021s}, and its effectiveness is verified both in theory and experiments.
Our algorithm also adopts the extra mixing steps,
which leads to better IFO complexity and communication complexity.

\subsection{Paper organization and notation}
\label{sub:paper_organization_and_notation}

\Cref{sec:preliminaries} introduces preliminary concepts and the algorithm development,
\Cref{sec:algorithm} shows the theoretical performance guarantees for \myalg,
\Cref{sec:numerical} provides numerical evidence to support the analysis,
and \Cref{sec:conclusion} concludes the paper.
Proofs and experiment settings are postponed to appendices.

Throughout this paper, we use boldface letters to represent matrices and vectors.
We use $\| \cdot \|_{\mathsf{op}}$ for matrix operator norm,
$\otimes$ for the Kronecker product,
$\bI_n$ for the $n$-dimensional identity matrix
and $\bm1_n$ for the $n$-dimensional all-one vector. 
For two real functions $f(\cdot)$ and $g(\cdot)$ defined on $\R^+$,
we say $f(x) = O \big( g(x) \big)$ or $f(x) \lesssim g(x)$ if there exists some universal constant $M > 0$ such that
$f(x)  \leq M g(x)$. The notation $f(x) =\Omega \big( g(x) \big)$ or $f(x)\gtrsim g(x)$ means $g(x) = O\big(f(x) \big)$.

\section{Preliminaries and Proposed Algorithm}%
\label{sec:preliminaries}
 
We start by describing a few useful preliminary concepts and definitions in \Cref{sec:preliminary},
then present the proposed algorithm in \Cref{sub:algorithm_development}. 

\subsection{Preliminaries}\label{sec:preliminary}

\paragraph{Mixing.}%
The information mixing between agents is conducted by updating the local information via a weighted sum of information from neighbors,
which is characterized by a mixing (gossiping) matrix. Concerning this matrix is an important quantity called the mixing rate, defined in \Cref{definition:mixing_matrix}.
\begin{definition}[Mixing matrix and mixing rate]
    \label{definition:mixing_matrix}
    The {\em mixing matrix} is a matrix $\bW = [w_{ij}] \in \R^{n \times n}$,
    such that $w_{ij} = 0$ if agent $i$ and $j$ are not connected according to the communication graph $\cG$. Furthermore,
    $\bW \bm1_n = \bm1_n$ and $\bW^\top \bm1_n = \bm1_n$.
The {\em mixing rate} of a mixing matrix $\bW$ is defined as
    \begin{align}
        \alpha := \big\|\bW  - \tfrac1n \bm1_n\bm1_n^\top \big\|_{\mathsf{op}}.
        \label{eq:def_alpha0}
    \end{align}
\end{definition}

The mixing rate indicates the speed of information shared across the network.
For example,
for a fully-connected network,
choosing $\bW = \tfrac{1}{n}\bm1_n\bm1_n^\top$ leads to $\alpha = 0$.
For general networks and mixing matrices,
\cite[Proposition 5]{nedic2018network} provides comprehensive bounds on $1-\alpha$---also known as the spectral gap---for various graphs.
In practice,
FDLA matrices \cite{Xiao2004} are more favorable because it can achieve a much smaller mixing rate,
but they usually contain negative elements and are not symmetric.
Different from other algorithms that require the mixing matrix to be doubly-stochastic,
our analysis can handle arbitrary mixing matrices as long as their row/column sums equal to one.

\paragraph{Dynamic average consensus.} It has been well understood by now that using a naive mixing of local information merely, e.g. the local gradients of neighboring agents, does not lead to fast convergence of decentralized extensions of centralized methods \cite{nedic2009distributed,shi2015extra}. This is due to the fact that the quantity of interest in solving decentralized optimization problems is often iteration-varying, which naive mixing is unable to track; consequently, an accumulation of errors leads to either slow convergence or poor accuracy. Fortunately, the general scheme of 
dynamic average consensus \cite{zhu2010discrete} proves to be extremely effective in this regard to track the dynamic average of local variables over the course of iterative algorithms, and has been applied to extend many central algorithms to decentralized settings, e.g.  \cite{nedic2017achieving,qu2018harnessing,di2016next,li2020communication}. This idea, also known as ``gradient tracking'' in the literature, essentially adds a correction term to the naive information mixing, which we will employ in the communication stage of the proposed algorithm to track the dynamic average of local gradients.

\paragraph{Stochastic recursive gradient methods.} Stochastic recursive gradients methods \cite{nguyen2019finite,fang2018spider,wang2019spiderboost,li2019ssrgd} achieve the optimal IFO complexity in the centralized setting for nonconvex finite-sum optimization, which make it natural to adapt them to the decentralized setting with the hope of maintaining the appealing IFO complexity. Roughly speaking, these methods use a nested loop structure to iteratively refine the parameter, where 1) a global gradient evaluation is performed at each outer loop, and 2) a stochastic recursive gradient estimator is used to calculate the gradient and update the parameter at each inner loop. In the proposed \myalg algorithm, this nested loop structure lends itself to a natural decentralized
scheme, as will be seen momentarily.

\paragraph{Additional notation.} For convenience of presentation, define the stacked vector $\x \in \R^{nd}$ and its average over all agents $\bbx \in \R^d$ as
\begin{align}
	\label{eq:dev_vectors}
        \x := \big[ \x_1^\top, \cdots, \x_n^\top \big]^{\top},
        \quad
        \bbx = \frac1n \sum_{i=1}^n \x_i.
\end{align}
The vectors $\bs$, $\bbs$, $\bu$, $\bbu$, $\bv$ and $\bbv$ are defined in the same fashion.
In addition, for a stacked vector $\x \in \R^{nd}$,
we introduce the distributed gradient $\nabla F(\x) \in \R^{nd}$ as
\begin{align}
	\label{eq:def_gradient}
    \nabla F(\x) := [\nabla f_1(\x_1)^\top, \cdots, \nabla f_n(\x_n)^\top]^\top .
\end{align}

\subsection{The \myalg Algorithm}
\label{sub:algorithm_development}

Detailed in \Cref{alg:network_sarah}, we propose a novel decentralized stochastic optimization algorithm, dubbed \myalg, for finding first-order order stationary points of nonconvex finite-sum problems. Motivated by stochastic recursive gradient methods in the centralized setting,
\myalg has a nested loop structure:
\begin{enumerate} 
\item The outer loop adopts dynamic average consensus to estimate and track the global gradient $\nabla F(\bx^{(t)})$ at each agent in \eqref{eq:gradient_tracking}, where $\bx^{(t)}$ is the stacked parameter estimate (cf.~\eqref{eq:def_gradient}). This helps to ``reset'' the stochastic gradient to a less noisy starting gradient $\bv^{(t), 0}=\bs^{(t)}$ of the inner loop. A key property of \eqref{eq:gradient_tracking}---which is a direct consequence of dynamic average consensus---is that the average of $\bs^{(t)}$ equals to the dynamic average of local gradients, i.e. $\bbs^{(t)} = \tfrac1n \sum_{i\in[n]} \bs_i^{(t)} = \tfrac1n \sum_{i\in[n]} \nabla f_i(\x_i^{(t)})$.

\item The inner loop refines the parameter estimate $\bu^{(t), 0} = \x^{(t)}$ by performing randomly activated stochastic recursive gradient updates \eqref{eq:inner_loop}, where the stochastic recursive gradient $\bg^{(t), s}$ is updated in \eqref{eq:inner_loop_sg} via sampling mini-batches from activated agents' local datasets.

\end{enumerate}
To complete the last mile, inspired by \cite{li2020communication}, we allow \myalg to perform a  few rounds of mixing or gossiping whenever communication takes place, to enable better information sharing and faster convergence. Specifically, 
\myalg performs $\KO$ and $\KI$ mixing steps for the outer and inner loops respectively per iteration,
which is equivalent to using 
$$ \bW_{\mathsf{out}} = \bW^{\KO} \qquad \mbox{and} \qquad \bW_{\mathsf{in}} = \bW^{\KI}$$ as mixing matrices, and correspondingly a network with better connectivity; see \eqref{eq:gradient_tracking}, \eqref{eq:inner_loop_var} and \eqref{eq:inner_loop_grad}. 
Note that \Cref{alg:network_sarah} is written in matrix notation, where the mixing steps are described by $\bW_{\mathsf{in}} \otimes \bI_n$ or $\bW_{\mathsf{out}} \otimes \bI_{n}$ and applied to all agents simultaneously. The extra mixing steps can be implemented by Chebyshev acceleration \cite{arioli2014chebyshev} with improved communication efficiency.

\begin{algorithm}[h]
 \caption{\myalg for decentralized nonconvex finite-sum optimization}
\label{alg:network_sarah}
\begin{algorithmic}[1]
    \STATE {\textbf{input:} initial parameter $\bbx^{(0)}$,
        step size $\eta$,
        activation probability $p$,
        batch size $b$,
        number of outer loops $T$,
        number of inner loops $S$,
        and number of communication (extra mixing) steps $\KI$ and $\KO$.
    }

    \STATE {\textbf{initialization:} set
    $\x_i^{(0)} = \bbx^{(0)}$ and
    $\bs_i^{(0)} = \nabla f(\bbx^{(0)})$ for all agents $1 \leq i \leq n$. }

    \FOR {$t = 1, \ldots, T$}
        \STATE{Set the new parameter estimate $\x^{(t)} = \bu^{(t-1), S}$.}
        
        \STATE{Update the global gradient estimate by aggregated local information and gradient tracking:} 
        \begin{align}
            \bs^{(t)} =& \WKO \Big( \bs^{(t-1)} + \nabla F \big( \x^{(t)} \big) - \nabla F \big( \x^{(t-1)} \big)  \Big)
            \label{eq:gradient_tracking}
        \end{align}

        \STATE{Set $\bu^{(t), 0} = \bx^{(t)}$ and $\bv^{(t), 0} = \bs^{(t)}$.}

            \FOR{$s=1,...,S$}

            \STATE Each agent $i$ samples a mini-batch $\cZ_i^{(t), s}$ of size $b$ from $\cM_i$ uniformly at random, $\lambda_i^{(t), s} \sim \mathcal B(p)$ where $\mathcal B(p)$ denotes the Bernoulli distribution with parameter $p$,\footnotemark~and then performs the following updates:
                \begin{subequations}
                \label{eq:inner_loop}
                \begin{align}
                    \bu^{(t), s} &= \WKI (\bu^{(t), s-1} - \eta \bv^{(t), s-1}),
                    \label{eq:inner_loop_var} \\
                    \bg_i^{(t),s} & = \frac{ \lambda_i^{(t), s}}{pb} \sum_{\bz_i \in \cZ_i^{(t), s}} \Big( \nabla \ell (\bu_i^{(t), s}; \bz_i) 
                                  - \nabla \ell (\bu_i^{(t), s-1}; \bz_i) \Big) + \bv_i^{(t), s-1} ,
                                  \label{eq:inner_loop_sg} \\
                    \bv^{(t), s} &= \WKI \bg^{(t),s} .
                    \label{eq:inner_loop_grad}
                \end{align}
                \end{subequations}
                \vspace{-15pt}

            \ENDFOR

    \ENDFOR
    \STATE {\textbf{output:} ~$\xout \sim \text{Uniform} (\{\bu_i^{(t), s-1} | i \in [n], t \in [T], s \in [S] \})$.}	
\end{algorithmic}
\end{algorithm}
\footnotetext{The stochastic gradients will not be computed if $\lambda_i^{(t), s} = 0$.}

Compared with existing decentralized algorithms based on stochastic variance-reduced algorithms such as D-GET \cite{sun2020improving} and GT-SARAH \cite{xin2020near},  
\myalg utilizes different gradient estimators and communication protocols:
First, \myalg produces a sequence of reference points $x^{(t)}$---which converge to a global first-order stationary point---to ``restart'' the inner loops periodically using fresher information; secondly, the communication and computation in \myalg are paced differently due to the introduction of extra mixing, which allow a more flexible trade-off schemes between different types of resources; last but not least, the random activation of stochastic recursive gradient updates further saves local computation, especially when the local sample size is small compared to the number of agents.
%

%
%

%

%
\section{Performance Guarantees}
\label{sec:algorithm}

This section presents the performance guarantees of \myalg for finding first-order stationary points of the global objective function $f(\cdot)$.

\subsection{Assumptions}
We first introduce \Cref{assumption:lipschitz_gradient} and \Cref{assumption:optimality_lower_bounded},
which are standard assumptions imposed on the loss function.
\Cref{assumption:lipschitz_gradient} implies that all local objective functions $f_i(\cdot)$ and the global objective function $f(\cdot)$ also have Lipschitz gradients,
and \Cref{assumption:optimality_lower_bounded} guarantees the absence of trivial solutions.
\begin{assumption}[Lipschitz gradient]
    \label{assumption:lipschitz_gradient}
    The sample loss function $\ell(\x; \bz)$ has $L$-Lipschitz gradients for all $\bz \in \cM$ and $\x \in \R^d$,
    namely,
    $\big\| \nabla \ell (\x; \bz) - \nabla \ell (\x'; \bz) \big\|_2 \leq L \| \x - \x' \|_2$, 
    $\forall \x, \x' \in \R^d$ and $\bz \in \cM$.
\end{assumption}

\begin{assumption}[Function boundedness]
    \label{assumption:optimality_lower_bounded}
    The global objective function $f(\cdot)$ is bounded below,
    i.e., $f^* = \inf_{\bx\in\mathbb{R}^d} f(\bx ) > -\infty$.
\end{assumption}

Due to the nonconvexity, first-order algorithms are generally guaranteed to converge to only first-order stationary points of the global loss function $f(\cdot)$, defined below in \Cref{definition:1st}.
\begin{definition}[First-order stationary point]
    \label{definition:1st}
    A point $\x \in \R^d$ is called an $\epsilon$-approximate first-order stationary point of a differentiable function $f(\cdot)$ if
    \begin{align*}
        \| \nabla f(\x) \|_2^2 \leq \epsilon .
    \end{align*} 
\end{definition}

\subsection{Main theorem}
\Cref{theorem:network_sarah_non_convex}, whose proof is deferred to Appendix~\ref{sec:proof_of_theorem_ref}, shows that \myalg converges in expectation to an approximate first-order stationary point, under suitable parameter choices.

\begin{theorem}[First-order optimality]
    \label{theorem:network_sarah_non_convex}
    Assume \Cref{assumption:lipschitz_gradient,assumption:optimality_lower_bounded} hold.
    Set $p \in (0, 1]$,
    $\KI$, $\KO$, $S$, $b$ and $\eta$ to be positive and satisfy
    \begin{equation}
    \label{eq:step_size_condition}
    \alpha^{\KI} \leq p
    \text{~~~~and~~~~}
    \eta L \leq \frac{(1 - \alpha^{\KI})^3 (1 - \alpha^{\KO})}{10 \big(1 + \alpha^{\KI} \alpha^{\KO} \sqrt{npb} \big)   \big( \sqrt{S/(npb)} + 1 \big)}.
    \end{equation}
The output produced by \Cref{alg:network_sarah} satisfies
    \begin{align}
        & \E \big\| \nabla f(\xout) \big\|^2_2
        < \frac{4}{\eta TS} \Big( \E[ f(\bbx^{(0)})] - f^* \Big) .
        \label{eq:theorem_1}
    \end{align}
\end{theorem}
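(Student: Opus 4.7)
My plan is to run a Lyapunov argument on the averaged iterate $\bbu^{(t),s}$, using the smoothness of $f$ for descent and tracking three auxiliary error quantities in a potential function. Because the row/column sums of $\bW$ are one, the averaging operator commutes with the mixing step, so $\bbu^{(t),s} = \bbu^{(t),s-1} - \eta\,\bbv^{(t),s-1}$ and $\bbv^{(t),s} = \bbg^{(t),s}$. Combined with \eqref{eq:gradient_tracking} and an inductive invariant, this gives the dynamic-average-consensus identity $\bbs^{(t)} = \tfrac{1}{n}\sum_i \nabla f_i(\x_i^{(t)})$ and, by the same token, $\bbv^{(t),s}$ equals a SARAH/SPIDER-style local stochastic recursive gradient averaged across agents.

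First I would write the one-step descent inequality from $L$-smoothness of $f$ applied to $\bbu^{(t),s}$ and $\bbu^{(t),s-1}$, producing the term $-\tfrac{\eta}{2}\|\nabla f(\bbu^{(t),s-1})\|_2^2$ on the right, at the cost of (i) the variance of $\bbv^{(t),s-1}$ relative to $\nabla f(\bbu^{(t),s-1})$ and (ii) a consensus gap between $\nabla f(\bbu^{(t),s-1})$ and the averaged local gradient $\tfrac{1}{n}\sum_i \nabla f_i(\bu_i^{(t),s-1})$. The latter is controlled by the parameter consensus error $\|\bu^{(t),s-1} - \bm{1}_n\otimes\bbu^{(t),s-1}\|_2^2$ via Lipschitzness. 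Next I would derive the usual SARAH-type recursion for the variance $\E\|\bbv^{(t),s} - \nabla f(\bbu^{(t),s})\|_2^2$ in terms of the previous variance, plus $\tfrac{L^2}{nb}\|\bu^{(t),s} - \bu^{(t),s-1}\|_2^2$ (from mini-batch sampling with batch size $b$) and consensus residuals; the outer-loop restart via \eqref{eq:gradient_tracking} is what allows this recursion to start from a fresh global gradient at each $t$.

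The decentralized heart of the proof is bounding the consensus residuals. Writing $\bPi_n = \tfrac{1}{n}\bm{1}_n\bm{1}_n^\top \otimes \bI_d$ and using $\|(\bW_{\mathsf{in}}\otimes \bI_d - \bPi_n)\bz\|_2 \leq \alpha^{\KI}\|\bz - \bPi_n\bz\|_2$ (and analogously for $\bW_{\mathsf{out}}$), the updates \eqref{eq:inner_loop_var}, \eqref{eq:inner_loop_grad} and \eqref{eq:gradient_tracking} give contractive recursions for the three consensus errors
\begin{equation*}
\cE_x^{(t),s}=\|\bu^{(t),s}-\bPi_n\bu^{(t),s}\|_2^2,\qquad \cE_v^{(t),s}=\|\bv^{(t),s}-\bPi_n\bv^{(t),s}\|_2^2,\qquad \cE_s^{(t)}=\|\bs^{(t)}-\bPi_n\bs^{(t)}\|_2^2,
\end{equation*}
with contraction factors $\alpha^{2\KI}$ or $\alpha^{2\KO}$ and cross-perturbations controlled by $\eta^2 L^2$ times the preceding consensus quantities (through the mini-batch bound $\E\|\bg^{(t),s}-\bv^{(t),s-1}\|_2^2 \lesssim \tfrac{L^2}{b}\|\bu^{(t),s}-\bu^{(t),s-1}\|_2^2$). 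I would assemble a potential
\begin{equation*}
\Phi^{(t),s} = \E f(\bbu^{(t),s}) + c_1\,\E\cE_x^{(t),s} + c_2\,\E\cE_v^{(t),s} + c_3\,\E\|\bbv^{(t),s}-\nabla f(\bbu^{(t),s})\|_2^2,
\end{equation*}
and, across each inner iteration, also carry the outer-loop quantity $\cE_s^{(t)}$ as a boundary term linking consecutive $t$'s.

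The main obstacle will be choosing the positive constants $c_1,c_2,c_3$ so that $\Phi^{(t),s}-\Phi^{(t),s-1} \leq -\tfrac{\eta}{4}\E\|\nabla f(\bbu^{(t),s-1})\|_2^2$. Each of the three conditions on $\eta$ in \eqref{eq:step_size_condition} should emerge from a separate balance: the first from canceling the $1/(nb)$ mini-batch variance term against the descent and consensus terms (which is why $\sqrt{S/(nb)}$ and $\alpha^{\KI}\alpha^{\KO}\sqrt{nb}$ appear), the second from enforcing contraction of $\cE_x$ against $\eta^2 L^2 \cE_v$, and the third from matching inner- and outer-loop consensus contractions in the gradient-tracking recursion. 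Once this algebraic tuning is in place, I would telescope $\Phi^{(t),s}$ over $s=1,\dots,S$ and $t=1,\dots,T$, use $\Phi^{(T),S}\geq f^\ast$ and $\Phi^{(0)} = \E f(\bbx^{(0)})$ (the other three components vanish at initialization because $\x_i^{(0)}=\bbx^{(0)}$ and $\bs_i^{(0)}=\nabla f(\bbx^{(0)})$ for all $i$), and finally convert $\tfrac{1}{nTS}\sum_{t,s,i}\E\|\nabla f(\bbu^{(t),s-1})\|_2^2$ into $\E\|\nabla f(\xout)\|_2^2$ using the uniform sampling of $\xout$ over $\{\bu_i^{(t),s-1}\}$ together with Lipschitz-gradient plus the bounded consensus error $\cE_x$, yielding the stated bound with the constant $4$.
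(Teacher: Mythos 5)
Your plan is correct and follows essentially the same route as the paper: descent from $L$-smoothness applied to the network average $\bbu^{(t),s}$ (which evolves as $\bbu^{(t),s-1}-\eta\bbv^{(t),s-1}$ by the row-sum property of $\bW$), a SARAH-type martingale recursion for $\E\|\bbv^{(t),s}-\nabla f(\bbu^{(t),s})\|_2^2$ with the $L^2/(nb)$ increment, contractive coupled recursions for the consensus errors driven by the mixing rates $\alpha^{\KI},\alpha^{\KO}$, and a final conversion from local iterates to the average via Lipschitzness plus the parameter consensus error. Your per-step Lyapunov bookkeeping with coefficients $c_1,c_2,c_3$ is just an equivalent repackaging of the paper's summed $2\times 2$ linear systems and Neumann-series bounds, and your attribution of the three step-size conditions to the three balances matches where the paper actually invokes them.
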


If there is only one agent, i.e. $n= 1$,
the mixing rate will be $\alpha=0$,
we can choose $\KI = \KO = p =1$,
and \Cref{theorem:network_sarah_non_convex} reduces to \cite[Theorem 1]{nguyen2019finite}, its counterpart in the centralized setting.
For general decentralized settings with arbitrary mixing schedules,
\Cref{theorem:network_sarah_non_convex} provides a comprehensive characterization of the convergence rate, where an $\epsilon$-approximate first-order stationary point can be found in expectation in a total of 
$$ TS = O\left(\frac{\E[ f(\bbx^{(0)})] - f^*}{\eta \epsilon} \right)$$ 
iterations; here, $T$ is the number of outer iterations and $S$ is the number of inner iterations. Clearly, a larger step size $\eta$, as allowable by \eqref{eq:step_size_condition}, hints on a smaller iteration complexity, and hence a smaller IFO complexity.

There are two conditions in \eqref{eq:step_size_condition}. On one end, $\KI$ needs to be large enough (i.e., perform more rounds of extra mixing) to counter the effect when $p$ is small (i.e., we compute less stochastic gradients every iteration), or when $\alpha$ is close to $1$ (i.e., the network is poorly connected).
On the other end, the step size $\eta$ needs to be small enough to account for the requirement of the step size in the centralized setting, as well as the effect of imperfect communication due to decentralization.
For well-connected networks where $\alpha \ll 1$, the terms introduced by the decentralized setting will diminish---indicating the iteration complexity is close to that of the centralized setting.
For poorly-connected networks, carefully designing the mixing matrix and other parameters can ensure a desirable trade-off between convergence speed and communication cost. 
The following corollary provides specific parameter choices for \myalg to achieve the optimal per-agent IFO complexity.
The proof is deferred to \Cref{sec:proof_of_corollay_1}.

\begin{corollary}[Complexity for finding first-order stationary points]
    \label{corollary:network_sarah_non_convex_two}
    Under conditions of \Cref{theorem:network_sarah_non_convex},
    set
    $S = \Big\lceil \sqrt{mn} \Big\rceil$,
    $b = \left\lceil \sqrt{m/n} \right\rceil$,
    $p = \frac{\sqrt{m / n}}{\left\lceil \sqrt{m/n} \right\rceil}$,
    $\KO = \left\lceil \frac{\log (\sqrt{npb} + 1)}{(1 - \alpha)^{1/2}} \right\rceil$,
    $\KI = \left\lceil \frac{\log (2 / p)}{(1 - \alpha)^{1/2}} \right\rceil$, $\eta = \frac{1}{640 L}$,
 and  implement the mixing steps using Chebyshev's acceleration \cite{arioli2014chebyshev}.
    To reach an $\epsilon$-approximate first-order stationary point,
    in expectation, \myalg takes $O\Big( m + \frac{(m/n)^{1/2} L}{\epsilon} \Big)$ IFO calls per agent, and
 $O\Big(\frac{\log \big((n/m)^{1/2} + 2 \big)}{(1 - \alpha)^{1/2}} \cdot \big( (mn)^{1/2} + \frac {L}{\epsilon} \big) \Big)$ rounds of communication.
\end{corollary}
As elaborated in Section~\ref{sub:contributions}, \myalg achieves a network-independent IFO complexity that matches the optimal complexity in the centralized setting. In addition, when the accuracy $\epsilon\lesssim   L/(mn)^{1/2}  $,
\myalg reaches a communication complexity of $O\big( \frac{1}{(1 - \alpha)^{1/2}} \cdot \frac{L}{\epsilon} \big)$, which is independent of the sample size.

It is worthwhile to further highlight the role of the random activation probability $p$ in achieving the optimal IFO by allowing ``fractional'' batch size. Note that the batch size is set as $b =\left\lceil \sqrt{m/n} \right\rceil$, where $m$ is the local sample size, and $n$ is the number of agents.
\begin{enumerate}
\item When the local sample size is large, i.e. $m\geq n$, we can approximate $b \approx \sqrt{m / n}$ and $p\approx 1$. In fact, Corollary~\ref{corollary:network_sarah_non_convex_two} continues to hold with $p =1$ in this regime. 
\item However, when the number of agents is large, i.e. $n>m$, the batch size $b=1$ and $p=\sqrt{m / n} <1$, which mitigates the potential computation waste by only selecting a subset of agents to perform local computation, compared to the case when we naively set $p=1$. 
\end{enumerate}
Therefore, by introducing random activation, we can view $pb = \sqrt{m/n}$ as the effective batch size at each agent, which allows fractional values and leads to the optimal IFO complexity in all scenarios.

\section{Numerical Experiments}
\label{sec:numerical}
This section provides numerical experiments 
on real datasets to evaluate our proposed algorithm \myalg with comparisons against two existing baselines: DSGD \cite{nedic2009distributed,lian2017can} and GT-SARAH \cite{xin2020near}. To allow for reproducibility, all codes can be found at 
\begin{center}
\href{https://github.com/liboyue/Network-Distributed-Algorithm}{https://github.com/liboyue/Network-Distributed-Algorithm}. 
\end{center}

For all experiments,
we set the number of agents $n = 20$,
and split the dataset uniformly at random to each agent.
In addition, since $m \gg n$ in all experiments, we set $p = 1$ for simplicity.
We run each experiment on three communication graphs with the same data assignment and starting point: Erd\"{o}s-R\`{e}nyi graph (the connectivity probability is set to $0.3$), grid graph, and path graph. The mixing matrices are chosen as the symmetric fastest distributed linear averaging (FDLA) matrices \cite{Xiao2004} generated according to different graph topologies, and the extra mixing steps are implemented by Chebyshev's acceleration \cite{arioli2014chebyshev} to save communications as described earlier.  To ensure convergence, DSGD adopts a diminishing step size schedule. All the parameters are tuned manually for best performance. We defer a detailed account of the baseline algorithms as well as parameter choices in Appendix~\ref{sec:baseline}.

\subsection{Regularized logistic regression}
\label{sub:logistic_regression}

To begin with,
we employ logistic regression with nonconvex regularization to solve a binary classification problem using the Gisette dataset.\footnote{The dataset can be accessed at \href{https://archive.ics.uci.edu/ml/datasets/Gisette}{https://archive.ics.uci.edu/ml/datasets/Gisette}.} 
We split the Gisette dataset to $n=20$ agents,
where each agent receives $m=300$ training samples of dimension $d=5000$.
The sample loss function is given as
\begin{align*}
    \ell(\x; \{ \bm f, l \})
    = -l \log \Big(\frac{1}{1 + \exp( \bx^\top \bm f)} \Big)
    + (1 - l) \log \Big( \frac{\exp(\bx^\top \bm f)}{1 + \exp(\bx^\top \bm f)} \Big)
    + \lambda \sum_{i=1}^d \frac{x_i^2}{1 + x_i^2},
\end{align*}
where $\{ \bm f, l \}$ represents a training tuple,
$\bm f \in \R^d$ is the feature vector and $l \in \{0, 1\}$ is the label, and $\lambda$ is the regularization parameter. For this experiment, we set $\lambda = 0.01$.

\begin{figure}[ht]
    \centering
    \includegraphics[width=\textwidth]{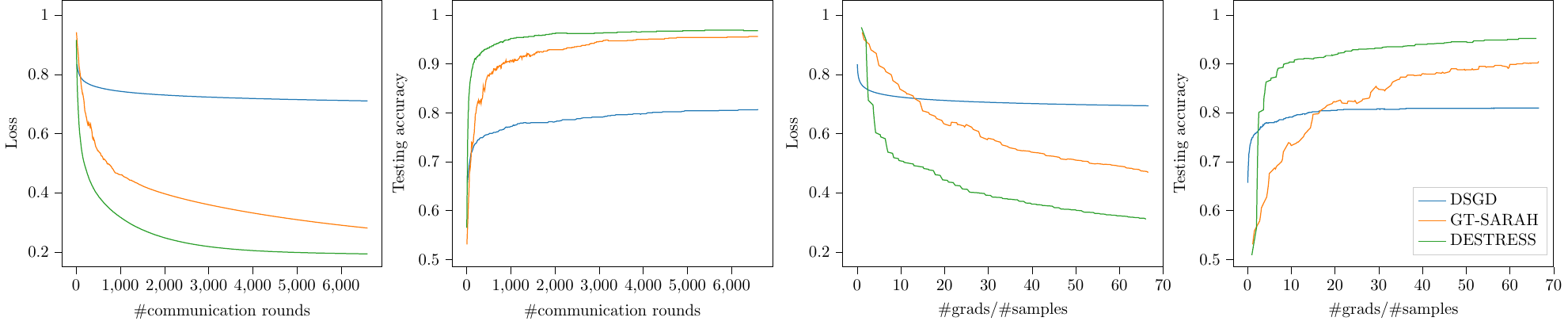} \\
    (a)  Erd\"{o}s-R\`{e}nyi  graph \\ \vspace{0.05in}
    \includegraphics[width=\textwidth]{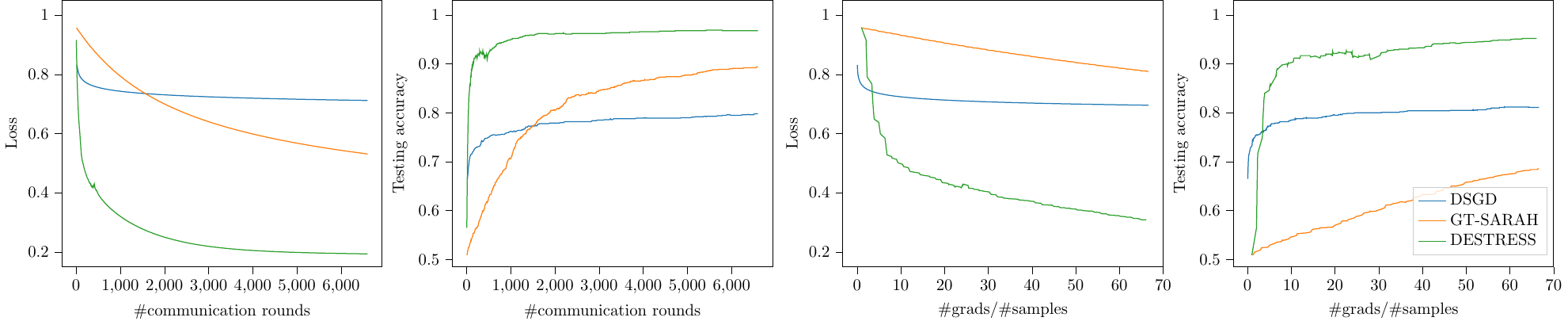}
        (b)  Grid graph \\ \vspace{0.05in}
    \includegraphics[width=\textwidth]{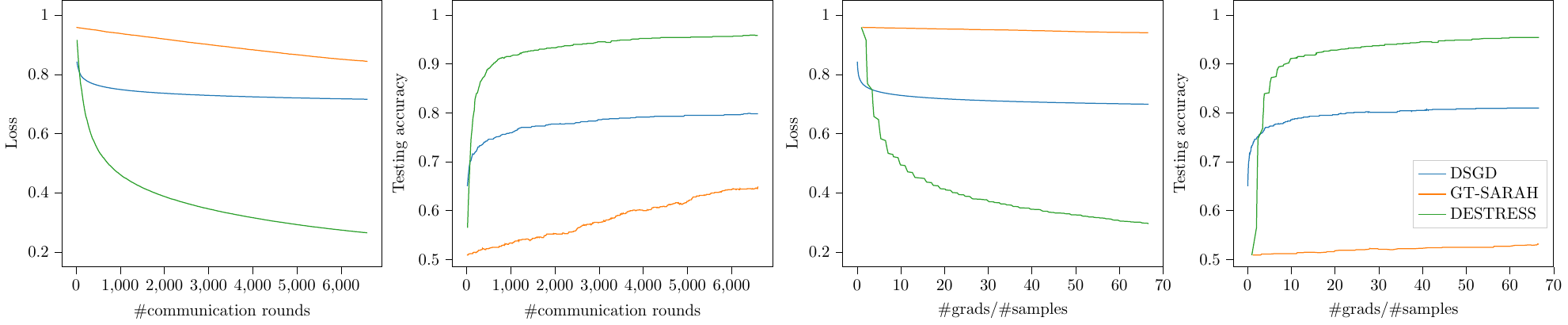}
        (c)  Path graph 
    \caption{The training loss and testing accuracy with respect to the number of communication rounds (left two panels) and gradient evaluations (right two panels) for DSGD, GT-SARAH and \myalg when training a regularized logistic regression model on the Gisette dataset.
        Due to the initial full-gradient computation, the gradient evaluations of \myalg and GT-SARAH do not start from $0$.
        \label{fig:gisette_expander}
    }
\end{figure}

\Cref{fig:gisette_expander} shows the loss and testing accuracy for all algorithms.
\myalg significantly outperforms other algorithms both in terms of communication and computation.
It is worth noting that,
DSGD converges very fast at the beginning of training, but cannot sustain the progress due to the diminishing schedule of step sizes.
On the contrary,
the variance-reduced algorithms can converge with a constant step size, and hence converge better overall.
Moreover, due to the refined gradient estimation and information mixing designs,
\myalg can bear a larger step size than GT-SARAH,
which leads to the fastest convergence and best overall performance. In addition, a larger number of extra mixing steps leads to a better performance when the graph topology becomes less connected.

\begin{figure}[ht]
    \centering
    \includegraphics[width=\textwidth]{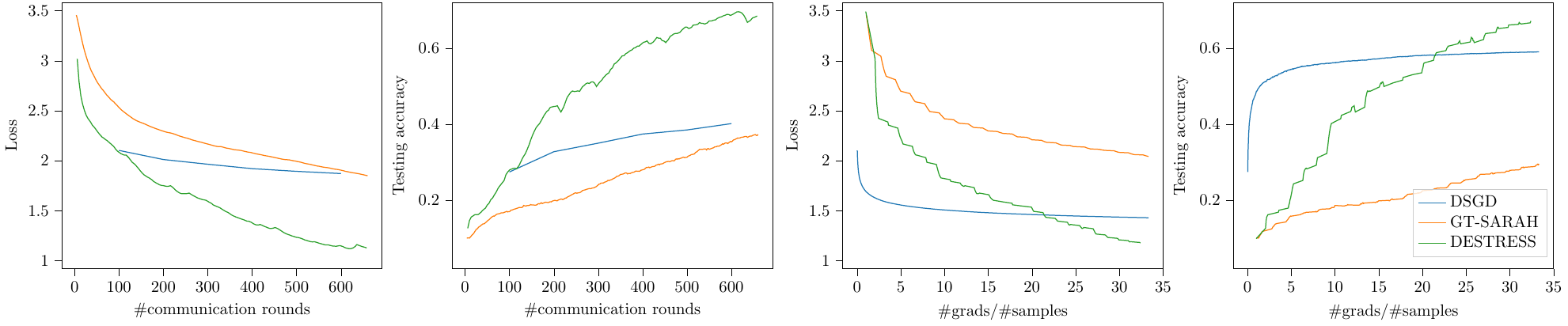} \\
    (a)  Erd\"{o}s-R\`{e}nyi  graph \\ \vspace{0.05in}
    \includegraphics[width=\textwidth]{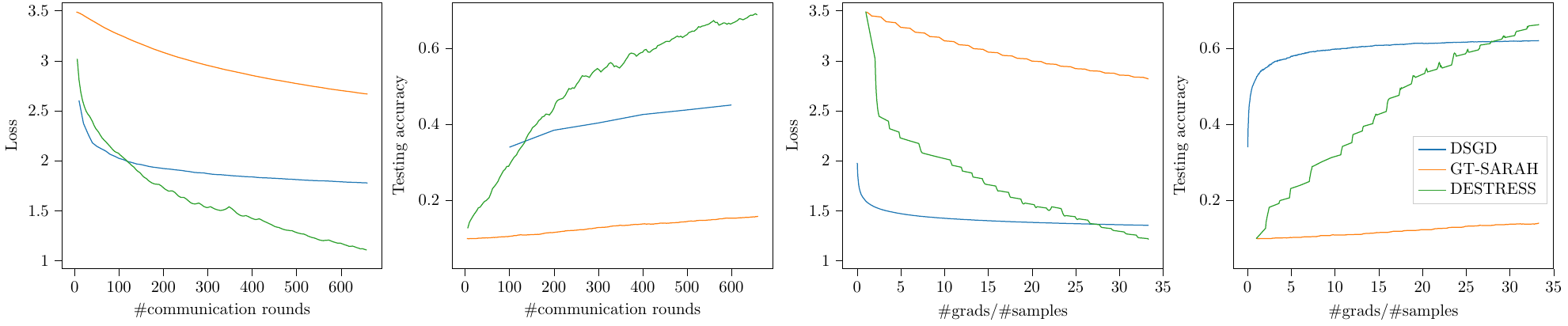}
        (b)  Grid graph \\ \vspace{0.05in}
    \includegraphics[width=\textwidth]{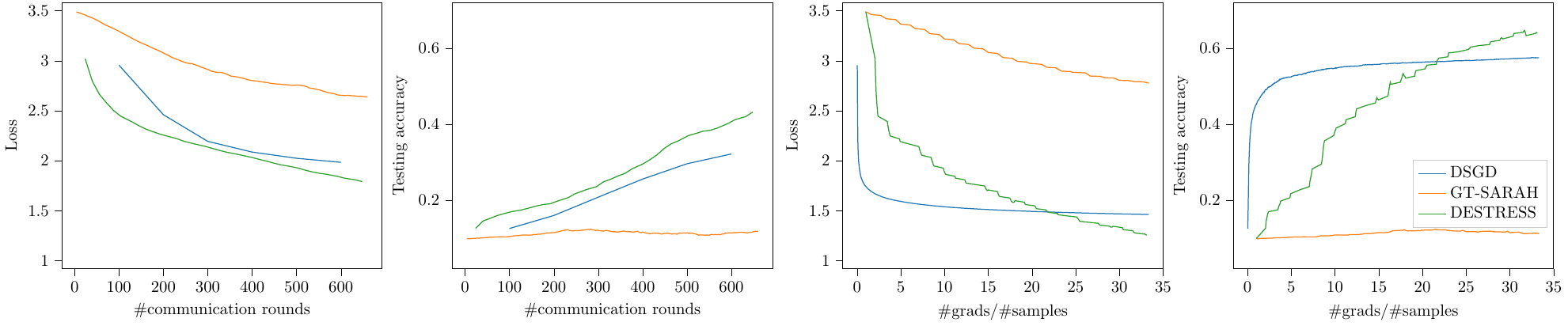}
        (c)  Path graph 
    \caption{The training loss and testing accuracy with respect to the number of communication rounds (left two panels) and gradient evaluations (right two panels) for DSGD, GT-SARAH and \myalg when training a one-hidden-layer neural network on the MNIST dataset.
        Due to the initial full-gradient computation, the gradient evaluations of \myalg and GT-SARAH do not start from $0$.
        \label{fig:nn}
    }
\end{figure}

\subsection{Neural network training}%
\label{sub:neural_network}

Next, we compare the performance of \myalg with comparisons to DSGD and GT-SARAH for training a one-hidden-layer neural network with $64$ hidden neurons and sigmoid activations for classifying the MNIST dataset \cite{deng2012mnist}.
We evenly split $60,000$ training samples to $20$ agents at random. \Cref{fig:nn} plots the training loss and testing accuracy against the number of communication rounds and gradient evaluations for all algorithms. Again,
\myalg significantly outperforms other algorithms in terms of computation and communication costs due to the larger step size and extra mixing, which validates our theoretical analysis.

\section{Conclusions}%
\label{sec:conclusion}

In this paper,
we proposed \myalg for decentralized nonconvex finite-sum optimization, where both its theoretical convergence guarantees and empirical performances on real-world datasets were presented. In sum, \myalg matches the optimal IFO complexity of centralized SARAH-type methods for finding first-order stationary points,
and improves both computation and communication complexities for a broad range of parameters regimes compared with existing approaches.
A natural and important extension of this paper is to generalize and develop convergence guarantees of \myalg for finding second-order stationary points, which we leave to future works.
 
\section*{Acknowledgements}
This work is supported in part by ONR N00014-19-1-2404, by AFRL under FA8750-20-2-0504, and by NSF under CCF-1901199 and CCF-2007911. The authors thank Ran Xin for helpful discussions.

\bibliographystyle{alphaabbr}
\newcommand{\etalchar}[1]{$^{#1}$}

\appendix

\section{Experiment details}
\label{sec:baseline}
For completeness, we list two baseline algorithms, DSGD \cite{nedic2009distributed,lian2017can} (cf. \Cref{alg:dsgd}) and
GT-SARAH \cite{xin2020near} (cf. \Cref{alg:dget}), which are compared numerically against the proposed \myalg algorithm in Section~\ref{sec:numerical}. Furthermore, the detailed hyperparameter settings for the experiments in Section~\ref{sub:logistic_regression} and Section~\ref{sub:neural_network} are listed in Table~\ref{table:params_lr} and Table~\ref{table:params_nn}, respectively.

\begin{figure}[htb]
\centering
\begin{minipage}{\linewidth}
\begin{algorithm}[H]
\caption{Decentralized stochastic gradient descent (DSGD)}
\label{alg:dsgd}
\begin{algorithmic}[1]
    \STATE {\textbf{input:} initial parameter $\bbx^{(0)}$,
        step size schedule $\eta_t$,
        number of outer loops $T$.
    }

    \STATE {\textbf{initialization:} set
    $\x_i^{(0)} = \bbx^{(0)}$.}

    \FOR {$t = 1, \ldots, T$}
        \STATE {Each agent $i$ samples a data point $\bz_i^{(t)}$ from $\cM_i$ uniformly at random and compute the stochastic gradient:}
$$\bg_i^{(t)} = \nabla \ell (\bu_i^{(t)}; \bz_i^{(t)}).$$

        \STATE{Update via local communication:
        $\x^{(t+1)} = \W ( \x^{(t)} - \eta_t \bg^{(t)} )$.}

    \ENDFOR
    \STATE {\textbf{output:} $\xout = \bbx^{(T)}$.}	
\end{algorithmic}
\end{algorithm}

\end{minipage}
\end{figure}

\begin{figure}[htb]
\centering
\begin{minipage}{\linewidth}

\begin{algorithm}[H]
\caption{GT-SARAH}
\label{alg:dget}
\begin{algorithmic}[1]
    \STATE {\textbf{input:} initial parameter $\bbx^{(0)}$,
        step size $\eta$, 
        number of outer loops $T$,
        number of inner loops $q$.
    }

    \STATE {\textbf{initialization:} set
        $\bv^{(0)} = \y^{(0)} = \nabla F(\x^{(0)})$.}

    \FOR {$t = 1, \ldots, T$}

        \STATE{Update via local communication $\x^{(t)} = \W \x^{(t-1)} - \eta \y^{(t-1)}$.}
        \IF{$\mod (t, q) = 0$}
            \STATE{$\bv^{(t)} = \nabla F(\x^{(t)}).$}

        \ELSE
        \STATE {Each agent $i$ samples a mini-batch $\cZ_i^{(t)}$ from $\cM_i$ uniformly at random, and then performs the following updates:}
 $$\bv_i^{(t)} = \frac1b \sum_{\bz_i \in \cZ_i^{(t)}} \big( \nabla \ell (\x_i^{(t)}; \bz_i) - \nabla \ell (\x_i^{(t-1)}; \bz_i) \big) + \bv_i^{(t-1)}.$$ 

        \ENDIF
        \STATE{Update via local communication $\y^{(t)} = \W \y^{(t-1)} + \bv^{(t)} - \bv^{(t-1)}$.}

    \ENDFOR
    \STATE {\textbf{output:} $\xout = \bbx^{(T)}$.}	
\end{algorithmic}
\end{algorithm}

\end{minipage}
\end{figure}

\begin{table}[ht]
    \centering
    
\begin{tabular}{c|c|cccccc|ccc}
\toprule
    Algorithms
    & DSGD
    & \multicolumn{6}{c|}{DESTRESS}
    & \multicolumn{3}{c}{GT-SARAH}     \\
    \midrule 

Parameters
    & $\eta_0$
    & $\eta$
    & $p$
    & $\KI$
    & $\KO$
    & $b$
    & $S$
    & $\eta$
    & $b$
    & $S$
    \\
    \hline
    \hline

   Erd\"{o}s-R\`{e}nyi
     & $1$
    & $1$
    & $1$
    & $2$
    & $2$
    & $10$
    & $10$
    & $0.1$
    & $10$
    & $10$
    \\
    
    \hline
    
    Grid
     & $1$
    & $1$
    & $1$
    & $2$
    & $2$
    & $10$
    & $10$
    & $0.01$
    & $10$
    & $10$
    \\

    \hline

    Path
     & $1$
    & $1$
    & $1$
    & $8$
    & $8$
    & $10$
    & $10$
    & $0.001$
    & $10$
    & $10$
    \\
\bottomrule
\end{tabular}
    \caption{Parameter settings for the experiments on regularized logistic regression in Section~\ref{sub:logistic_regression}.
    \label{table:params_lr}
    }
\end{table}

\begin{table}[!ht]
    \centering
    
\begin{tabular}{c|c|cccccc|ccc}
\toprule
    Algorithms
    & DSGD
    & \multicolumn{6}{c|}{DESTRESS}
    & \multicolumn{3}{c}{GT-SARAH}     \\
    \hline

Parameters
    & $\eta_0$
    & $\eta$
    & $p$
    & $\KI$
    & $\KO$
    & $b$
    & $S$
    & $\eta$
    & $b$
    & $S$
    \\
    \hline
    \hline

   Erd\"{o}s-R\`{e}nyi
    & $1$
    & $0.1$
    & $1$
    & $2$
    & $2$
    & $100$
    & $10$
    & $0.01$
    & $100$
    & $10$
    \\
    
    \hline
    
    Grid
    & $1$
    & $0.1$
    & $1$
    & $2$
    & $2$
    & $100$
    & $10$
    & $0.001$
    & $100$
    & $10$
    \\

    \hline

    Path
    & $1$
    & $0.1$
    & $1$
    & $8$
    & $8$
    & $100$
    & $10$
    & $0.001$
    & $100$
    & $10$
    \\
\bottomrule
\end{tabular}
    \caption{Parameter settings for the experiments on neural network training in Section~\ref{sub:neural_network}.
    \label{table:params_nn}
    }
\end{table}

\section{Proof of Theorem \ref{theorem:network_sarah_non_convex}}%
\label{sec:proof_of_theorem_ref}

For notation simplicity, let 
\begin{align*}
\alphai &= \alpha^{\KI}, \qquad\alphao = \alpha^{\KO}
\end{align*} 
throughout the proof. 
In addition, with a slight abuse of notation, we define the global gradient $\nabla f(\x) \in \R^{nd}$ of an $(nd)$-dimensional vector $\x=\big[ \x_1^\top, \cdots, \x_n^\top \big]^{\top}$, where $\x_i\in\mathbb{R}^d$, as follows
\begin{align}
	\label{eq:glo_gradient}
    \nabla f(\x) := [\nabla f(\x_1)^\top, \cdots, \nabla f(\x_n)^\top]^\top .
\end{align}

The following fact is a straightforward consequence of our assumption on the mixing matrix $\bW $ in \Cref{definition:mixing_matrix}.

\begin{fact}\label{lemma:prelim}
Let 
$ \x = \big[ \x_1^\top, \cdots, \x_n^\top \big]^{\top}$,
     and
  $ \bbx = \frac1n \sum_{i=1}^n \x_i$, where $\x_i\in\mathbb{R}^d$. For a mixing matrix $\bW \in \mathbb{R}^{n\times n}$ satisfying \Cref{definition:mixing_matrix}, we have
  \begin{enumerate}
  \item $\ave(\bW \otimes \bI_d) \x = \ave \x = \bbx $;
  \item $\big( \bI_{nd} - \mean \big) (\bW \otimes \bI_d) =(\bW \otimes \bI_d -\mean) \big( \bI_{nd} - \mean \big)$.
  \end{enumerate}
\end{fact}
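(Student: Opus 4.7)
The plan is to prove both parts of \Cref{lemma:prelim} by direct algebraic manipulation using the mixed-product property of the Kronecker product, namely $(\bA \otimes \bB)(\bC \otimes \bD) = (\bA\bC) \otimes (\bB\bD)$, together with the row/column sum conditions $\bW \bm{1}_n = \bm{1}_n$ and $\bW^\top \bm{1}_n = \bm{1}_n$ from \Cref{definition:mixing_matrix}. Throughout, I will write $\bJ := \frac{1}{n}\bm{1}_n \bm{1}_n^\top$, so that $\mean = \bJ \otimes \bI_d$ and $\ave = \frac{1}{n}\bm{1}_n^\top \otimes \bI_d$.

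For part 1, I will first compute $\ave (\bW \otimes \bI_d)$ by the mixed-product rule:
\begin{equation*}
\Big(\tfrac{1}{n} \bm{1}_n^\top \otimes \bI_d\Big)(\bW \otimes \bI_d) = \tfrac{1}{n}(\bm{1}_n^\top \bW) \otimes \bI_d,
\end{equation*}
and then use the column-sum condition $\bm{1}_n^\top \bW = (\bW^\top \bm{1}_n)^\top = \bm{1}_n^\top$ to conclude that this equals $\ave$. Applying the resulting operator to $\x$ and unpacking $\ave \x = \frac{1}{n}\sum_i \x_i = \bbx$ finishes part 1.

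For part 2, I will expand both sides after substituting $\mean = \bJ \otimes \bI_d$. The key elementary identities I need are $\bJ \bW = \bJ$, $\bW \bJ = \bJ$, and $\bJ^2 = \bJ$, which follow immediately from $\bm{1}_n^\top \bW = \bm{1}_n^\top$, $\bW \bm{1}_n = \bm{1}_n$, and the definition of $\bJ$. Using these with the mixed-product rule, the left-hand side simplifies to $\bW \otimes \bI_d - (\bJ \bW) \otimes \bI_d = (\bW - \bJ) \otimes \bI_d$, and the right-hand side expands to $(\bW - \bJ) \otimes \bI_d - (\bW \bJ - \bJ^2) \otimes \bI_d = (\bW - \bJ) \otimes \bI_d$, so the two sides agree.

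There is no real obstacle here: the statement is a bookkeeping lemma about how averaging projections interact with the lifted mixing operator $\bW \otimes \bI_d$, and the only ingredients needed are the Kronecker mixed-product rule and the stochasticity of $\bW$ recorded in \Cref{definition:mixing_matrix}. The mild care required is only to track that $\bW$ need not be symmetric, so one must explicitly invoke the column-sum condition (not just the row-sum condition) when commuting $\bm{1}_n^\top$ past $\bW$ in part 1 and when showing $\bJ \bW = \bJ$ in part 2.
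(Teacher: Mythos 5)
Your proof is correct: both identities follow exactly as you compute, using the mixed-product rule together with $\bm{1}_n^\top \bW = \bm{1}_n^\top$ and $\bW\bm{1}_n = \bm{1}_n$, and you rightly note that the column-sum condition must be invoked separately since $\bW$ need not be symmetric. The paper states this Fact without proof as a straightforward consequence of Definition~\ref{definition:mixing_matrix}, and your direct verification is precisely the intended argument.
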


To begin with, we introduce a key lemma
that upper bounds the norm of the gradient of the global loss function evaluated at the average local estimates over $n$ agents,
in terms of the function value difference at the beginning and the end of the inner loop,
the gradient estimation error, and the norm of gradient estimates.
 
\begin{lemma}[Inner loop induction] 
    \label{lemma:outer_loop_induction}
    Assume \Cref{assumption:lipschitz_gradient} holds.
    After $S \geq 1$ inner loops, one has
    \begin{align*}
        \sum_{s=0}^{S-1} \| \nabla f(\bbu^{(t), s})  \|^2_2
        & \leq \frac2\eta \Big( f(\bbu^{(t), 0}) - f(\bbu^{(t), S}) \Big) \\
        &~~~~~~~~+ \sum_{s=0}^{S-1} \big\| \nabla f(\bbu^{(t), s}) - \bbv^{(t), s} \big\|^2_2
        - (1 - \eta L) \sum_{s=0}^{S-1} \big\| \bbv^{(t), s} \big\|^2_2.
    \end{align*}
\end{lemma}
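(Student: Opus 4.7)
The plan is to reduce the lemma to a standard descent-lemma argument applied to the \emph{averaged} iterate $\bbu^{(t),s}$, after first showing that the averaging commutes with the mixing step.

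First I would establish the average-iterate dynamics. The update \eqref{eq:inner_loop_var} reads $\bu^{(t),s} = \WKI(\bu^{(t),s-1} - \eta \bv^{(t),s-1})$. Applying $\ave$ on both sides and invoking Fact~\ref{lemma:prelim} (part 1), which says $\ave \WKI = \ave$, I obtain
\begin{equation*}
\bbu^{(t),s} = \bbu^{(t),s-1} - \eta\, \bbv^{(t),s-1}.
\end{equation*}
So on average, the inner loop is a plain descent step driven by $\bbv^{(t),s-1}$, regardless of how many rounds of mixing were performed.

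Next I would apply the $L$-smoothness descent inequality, which follows from \Cref{assumption:lipschitz_gradient} (since $f$ inherits $L$-smoothness from $\ell$):
\begin{equation*}
f(\bbu^{(t),s}) \le f(\bbu^{(t),s-1}) - \eta\, \langle \nabla f(\bbu^{(t),s-1}),\, \bbv^{(t),s-1}\rangle + \tfrac{L\eta^2}{2}\,\|\bbv^{(t),s-1}\|_2^2.
\end{equation*}
Then I would rewrite the inner product using the polarization identity $-\langle a,b\rangle = \tfrac12\|a-b\|_2^2 - \tfrac12\|a\|_2^2 - \tfrac12\|b\|_2^2$ with $a = \nabla f(\bbu^{(t),s-1})$ and $b = \bbv^{(t),s-1}$. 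This produces three terms: a gradient-norm term (which I want on the left), the gradient-estimation-error term $\|\nabla f(\bbu^{(t),s-1}) - \bbv^{(t),s-1}\|_2^2$ (which appears on the right of the statement), and a $-\tfrac{\eta}{2}\|\bbv^{(t),s-1}\|_2^2$ term that combines with the smoothness term to yield the factor $-\tfrac{\eta(1-\eta L)}{2}\|\bbv^{(t),s-1}\|_2^2$.

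Finally I would rearrange to isolate $\|\nabla f(\bbu^{(t),s-1})\|_2^2$, multiply both sides by $2/\eta$, and sum over $s=1,\dots,S$ (re-indexed as $s=0,\dots,S-1$). The $f$-values telescope into $\tfrac{2}{\eta}\big(f(\bbu^{(t),0}) - f(\bbu^{(t),S})\big)$, yielding exactly the claimed inequality. There is no real obstacle here: the lemma is essentially a deterministic one-step descent identity for the mean iterate, and the only nontrivial observation is that Fact~\ref{lemma:prelim} allows the averaged dynamics to look like centralized gradient descent with biased gradient $\bbv^{(t),s-1}$, thereby decoupling the mixing mechanics from the descent analysis.
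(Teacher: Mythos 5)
Your proposal is correct and follows essentially the same route as the paper's proof: reduce to the averaged dynamics $\bbu^{(t),s}=\bbu^{(t),s-1}-\eta\,\bbv^{(t),s-1}$ via Fact~\ref{lemma:prelim}, apply the $L$-smoothness descent inequality, expand the inner product with the polarization identity, and sum/telescope over the inner loop. No gaps.
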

\begin{proof}[Proof of \Cref{lemma:outer_loop_induction}]
The local update rule \eqref{eq:inner_loop_var}, combined with Lemma~\ref{lemma:prelim}, yields 
$$ \bbu^{(t), s+1} = \bbu^{(t), s} - \eta \bbv^{(t), s}.$$
By \Cref{assumption:lipschitz_gradient}, we have
\begin{align}
    f(\bbu^{(t), s+1})
    &= f(\bbu^{(t), s} - \eta \bbv^{(t), s}) \notag \\
    &\leq f(\bbu^{(t), s}) - \big\langle \nabla f(\bbu^{(t), s}), \eta \bbv^{(t), s} \big\rangle + \frac L2 \big\| \eta \bbv^{(t), s} \big\|^2_2 \notag \\
    &= f(\bbu^{(t), s})
    - \frac\eta2 \big\| \nabla f(\bbu^{(t), s}) \big\|^2_2
    + \frac{\eta}{2} \big\| \nabla f(\bbu^{(t), s}) - \bbv^{(t), s} \big\|^2_2  
     - \Big(\frac\eta2 - \frac {\eta^2 L}{2}\Big) \big\| \bbv^{(t), s} \big\|^2_2,
    \label{eq:network_sarah_non_convex_lemma_1}
\end{align}
where the last equality is obtained by applying $-\langle \ba, \bb \rangle = \frac12 \big( \| \ba - \bb \|^2_2 - \| \ba \|^2_2 - \| \bb \|^2_2 \big)$. Summing over $s=0,\ldots, S-1$ finishes the proof.
\end{proof}

    Because the output $\xout$ is chosen from $\big\{\bu_i^{(t), s-1} | i \in [n], t \in [T], s \in [S] \big\}$ uniformly at random,
we can compute the expectation of the output's gradient as follows:
\begin{align}
    nTS \E \big\| \nabla f(\xout) \big\|_2^2
    &= \sum_{i=1}^n \sum_{t=1}^T \sum_{s=0}^{S-1} \E \big\| \nabla f(\bu_i^{(t), s}) \big\|_2^2 \notag \\
 &  \overset{\text{(i)}}{=} \sum_{t=1}^T \sum_{s=0}^{S-1} \E \big\| \nabla f(\bu^{(t), s}) \big\|_2^2 \notag \\
 &   = \sum_{t=1}^T \sum_{s=0}^{S-1} \E \big\| \nabla f(\bu^{(t), s}) - \nabla f(\onet\bbu^{(t), s}) + \nabla f(\onet\bbu^{(t), s}) \big\|_2^2 \notag \\
    & \overset{\text{(ii)}}{\leq} 2 \sum_{t=1}^T \sum_{s=0}^{S-1} \Big( \E \big\| \nabla f(\bu^{(t), s}) - \nabla f(\onet\bbu^{(t), s}) \big\|_2^2 + \E \big\| \nabla f(\onet\bbu^{(t), s}) \big\|_2^2 \Big) \notag \\
   &   \overset{\text{(iii)}}{\leq} 2 \sum_{t=0}^T \sum_{s=0}^{S-1} \Big( L^2 \E \big\| \bu^{(t), s} - \onet\bbu^{(t), s} \big\|_2^2 + n \E \big\| \nabla f(\bbu^{(t), s}) \big\|_2^2 \Big) ,
    \label{eq:bound_ouf_output_gradient_1}
\end{align}
where (i) follows from the change of notation using \eqref{eq:glo_gradient}, (ii) follows from the Cauchy-Schwartz inequality, and (iii) follows from \Cref{assumption:lipschitz_gradient} and extending the summation to $t = 0, ... T$. Then, in view of \Cref{lemma:outer_loop_induction},
\eqref{eq:bound_ouf_output_gradient_1} can be further bounded by
\begin{align}
    n TS \E \big\| \nabla f(\xout) \big\|_2^2
   & \leq \frac{4n}{\eta} \Big( \E[ f(\bbx^{(0)})] - f^* \Big)
    + 2 L^2 \sum_{t=0}^T \sum_{s=0}^{S-1} \E \big\| \bu^{(t), s} - \onet\bbu^{(t), s} \big\|_2^2 
    \notag \\
    & \qquad+ 2n \sum_{t=0}^T \sum_{s=0}^{S-1} \Big( \E \big\| \nabla f(\bbu^{(t), s}) - \bbv^{(t), s} \big\|^2_2
    - (1 - \eta L) \E \big\| \bbv^{(t), s} \big\|^2_2 \Big) ,
    \label{eq:bound_of_output_gradient_2}
\end{align}
where we use $\bbu^{(t), 0} = \bbx^{(t)}$ and $f(\bbu^{(t), S}) \geq f^*$.

Next, we present \Cref{lemma:sum_of_inner_loop_errors,lemma:sum_of_outer_error} to bound the double sum in \eqref{eq:bound_of_output_gradient_2},
whose proofs can be found in \Cref{sec:proof_of_lemma:sum_of_inner_loop_errors} and \Cref{sub:proof_of_lemma:sum_of_outer_vars}, respectively.

\begin{lemma}[Sum of inner loop errors]
\label{lemma:sum_of_inner_loop_errors}
Assuming all conditions in \Cref{theorem:network_sarah_non_convex} hold.
For all $t > 0$, we can bound the summation of inner loop errors as
\begin{align*}
    2 L^2 \sum_{s=0}^{S-1} \E \big\| & \bu^{(t), s} - \onet\bbu^{(t), s} \big\|_2^2
    + 2n \sum_{s=0}^{S-1} \E \big\| \nabla f(\bbu^{(t), s}) - \bbv^{(t), s} \big\|^2_2 \\
    & \leq \frac{64 L^2}{1 - \alphai} \cdot \Big( \frac{S}{npb} + 1 \Big) \E \big\| \bx^{(t)} - \onet\bbx^{(t)} \big\|_2^2 + 2 \alphai^2 \E \big\| \bs^{(t)} - \onet\bbs^{(t)} \big\|_2^2
    + \frac{2n}{25} \sum_{s=1}^{S} \E \big\| \bbv^{(t), s-1} \big\|^2_2 .
\end{align*}

\end{lemma}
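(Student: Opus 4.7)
\bigskip
\noindent\textbf{Proof plan for \Cref{lemma:sum_of_inner_loop_errors}.}
The plan is to track three coupled quantities along the inner loop: the parameter consensus error $A_s := \E \| \bu^{(t),s} - \onet\bbu^{(t),s} \|_2^2$, the gradient-tracker consensus error $B_s := \E \| \bv^{(t),s} - \onet\bbv^{(t),s} \|_2^2$, and the averaged gradient estimation error $E_s := \E \| \nabla f(\bbu^{(t),s}) - \bbv^{(t),s} \|_2^2$. Each quantity admits a one-step recursion that combines a contraction/martingale increment with Lipschitz perturbations; summing the three recursions over $s=0,\ldots,S-1$ yields the stated bound.

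For $A_s$, I will use the update rule $\bu^{(t),s} = \WKI(\bu^{(t),s-1} - \eta \bv^{(t),s-1})$ together with Fact~\ref{lemma:prelim}, which gives $(\bI_{nd}-\mean)\WKI = (\WKI-\mean)(\bI_{nd}-\mean)$, so that $\|\bu^{(t),s}-\onet\bbu^{(t),s}\|_2 \le \alphai \|\bu^{(t),s-1}-\eta\bv^{(t),s-1}-\onet(\bbu^{(t),s-1}-\eta\bbv^{(t),s-1})\|_2$. Applying Young's inequality with a parameter chosen so that $\alphai^2(1+c)\le (1+\alphai^2)/2$ yields a contraction $A_s \lesssim \frac{1+\alphai^2}{2} A_{s-1} + \tfrac{\alphai^2 \eta^2}{1-\alphai} B_{s-1}$, and an analogous derivation controls $B_s$ via $\bv^{(t),s}=\WKI \bg^{(t),s}$, introducing an $L$-Lipschitz term on $\|\bu^{(t),s}-\bu^{(t),s-1}\|_2^2$ plus a mini-batch variance of order $\tfrac{L^2}{b}\|\bu^{(t),s}-\bu^{(t),s-1}\|_2^2$. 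Telescoping these two geometric recursions picks up the $\tfrac{1}{1-\alphai}$ factor, bounds $A_0$ and $B_0$ by $\|\bx^{(t)}-\onet\bbx^{(t)}\|_2^2$ and $\|\bs^{(t)}-\onet\bbs^{(t)}\|_2^2$ respectively, and converts the $\|\bu^{(t),s}-\bu^{(t),s-1}\|_2^2$ perturbations into averages of the form $\eta^2\|\bbv^{(t),s-1}\|_2^2$ plus lower-order consensus terms.

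For $E_s$, I will exploit the SARAH-style martingale structure: taking conditional expectation with respect to the mini-batch at step $s$, the increment $\bbv^{(t),s}-\bbv^{(t),s-1}$ has conditional mean $\tfrac{1}{n}\sum_i (\nabla f_i(\bu_i^{(t),s}) - \nabla f_i(\bu_i^{(t),s-1}))$, so the bias $\nabla f(\bbu^{(t),s})-\nabla f(\bbu^{(t),s-1}) - \E[\bbv^{(t),s}-\bbv^{(t),s-1}\mid \cF_{s-1}]$ is bounded by Lipschitz terms in the consensus error, and the conditional variance is bounded by $\tfrac{L^2}{nb}\|\bu^{(t),s}-\bu^{(t),s-1}\|_2^2$ using \Cref{assumption:lipschitz_gradient} on each sampled gradient difference. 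This gives $E_s \le E_{s-1} + \tfrac{L^2}{nb}\E\|\bu^{(t),s}-\bu^{(t),s-1}\|_2^2 + \text{consensus terms}$, which telescopes to a bound of the form $S\cdot \tfrac{L^2}{nb}\E\|\bu^{(t),s}-\bu^{(t),s-1}\|_2^2$ after summation; the factor $S$ is what produces the $\tfrac{S\eta^2 L^2}{nb}\|\bbv^{(t),s-1}\|_2^2$ term in the claim. The initial condition $E_0 = \|\nabla f(\bbx^{(t)})-\bbs^{(t)}\|_2^2$ is at most $\tfrac{L^2}{n}\|\bx^{(t)}-\onet\bbx^{(t)}\|_2^2$ by Jensen plus Lipschitz, since $\bbs^{(t)}=\tfrac{1}{n}\sum_i \nabla f_i(\bx_i^{(t)})$, and therefore cleanly feeds into the parameter-consensus term on the RHS.

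The main obstacle will be bookkeeping the Young's-inequality constants so that the three coupled recursions close: the perturbations in the $A_s$ recursion depend on $B_{s-1}$ and $\|\bu^{(t),s}-\bu^{(t),s-1}\|_2^2$, while those in $B_s$ and $E_s$ depend back on $A_{s}$ and the same difference $\|\bu^{(t),s}-\bu^{(t),s-1}\|_2^2 \le 2\eta^2\|\bv^{(t),s-1}\|_2^2 \le 4\eta^2\|\bbv^{(t),s-1}\|_2^2 + 4\eta^2 B_{s-1}$. The step-size conditions in \eqref{eq:step_size_condition} are precisely what is needed so that after telescoping, all cross-terms are absorbed by geometric sums of ratio $(1+\alphai^2)/2$, the remaining contribution reduces to the claimed $\tfrac{1}{50}+\tfrac{2S\eta^2 L^2}{nb}$ coefficient on $\|\bbv^{(t),s-1}\|_2^2$, and the coefficients in front of $\|\bx^{(t)}-\onet\bbx^{(t)}\|_2^2$ and $\|\bs^{(t)}-\onet\bbs^{(t)}\|_2^2$ consolidate into $\tfrac{64L^2}{1-\alphai}(\tfrac{S}{nb}+1)$ and $\tfrac{2\alphai^2}{1-\alphai}$ respectively.
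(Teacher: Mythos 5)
Your overall strategy---coupled contraction recursions for the two consensus errors via the mixing update and Fact~\ref{lemma:prelim}, a SARAH-type martingale argument with conditional variance $\tfrac{L^2}{nb}\E\|\bu^{(t),s}-\bu^{(t),s-1}\|_2^2$, and the step-size condition to absorb cross terms---is the same as the paper's, which formalizes the telescoping of the consensus recursions as a $2\times 2$ linear system with a Neumann-series bound. But there is a genuine problem with your treatment of $E_s = \E\|\nabla f(\bbu^{(t),s})-\bbv^{(t),s}\|_2^2$. You assert the recursion $E_s \le E_{s-1} + \tfrac{L^2}{nb}\E\|\bu^{(t),s}-\bu^{(t),s-1}\|_2^2 + \text{consensus terms}$ with coefficient exactly $1$ on $E_{s-1}$. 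The bias $c_s := \nabla f(\bbu^{(t),s})-\nabla f(\bbu^{(t),s-1}) - \tfrac1n\sum_i\big(\nabla f_i(\bu_i^{(t),s})-\nabla f_i(\bu_i^{(t),s-1})\big)$ is $\cF_{s-1}$-measurable and correlated with the accumulated error, so the cross term $\langle \nabla f(\bbu^{(t),s-1})-\bbv^{(t),s-1},\, c_s\rangle$ does not vanish in expectation; any Young's inequality turns the coefficient on $E_{s-1}$ into $1+\delta$, which compounds to $(1+\delta)^S$ over the inner loop. Taking $\delta \asymp 1/S$ to control the compounding forces a factor $S$ onto $\|c_s\|^2 \lesssim \tfrac{L^2}{n}\|\bu^{(t),s}-\onet\bbu^{(t),s}\|_2^2 + \cdots$, and after summing over $s$ the coefficient of $\E\|\bx^{(t)}-\onet\bbx^{(t)}\|_2^2$ degrades to order $S^2L^2/n$ instead of the claimed $L^2(\tfrac{S}{nb}+1)$; with the parameter choices of Corollary~\ref{corollary:network_sarah_non_convex_two} that is worse by a factor of roughly $m$. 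The paper avoids any recursion on $E_s$: it splits $E_s \le \tfrac{2L^2}{n}\E\|\bu^{(t),s}-\onet\bbu^{(t),s}\|_2^2 + 2\E\|\ave(\nabla F(\bu^{(t),s})-\bv^{(t),s})\|_2^2$ and observes that the second quantity is an exact martingale whose initial value is \emph{identically zero} by the gradient-tracking identity $\bbv^{(t),0}=\bbs^{(t)}=\tfrac1n\sum_i\nabla f_i(\x_i^{(t)})$, so its second moment is simply the sum of increment variances, with no cross terms and no compounding. You should restructure your $E_s$ argument along these lines.

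A secondary slip: the chain $\|\bu^{(t),s}-\bu^{(t),s-1}\|_2^2 \le 2\eta^2\|\bv^{(t),s-1}\|_2^2$ in your last paragraph is false, since $\bu^{(t),s}-\bu^{(t),s-1} = (\WKI-\bI_{nd})\bu^{(t),s-1} - \eta\WKI\bv^{(t),s-1}$ and the first term does not vanish; the correct bound (the paper's \eqref{eq:sum_of_inner_gradients_10}) carries an additional $8\|\bu^{(t),s-1}-\onet\bbu^{(t),s-1}\|_2^2$, and decomposing $\|\bv^{(t),s-1}\|_2^2$ produces $n\|\bbv^{(t),s-1}\|_2^2$ rather than $\|\bbv^{(t),s-1}\|_2^2$---that factor of $n$ is needed to match the $2n(\cdot)\sum_{s}\E\|\bbv^{(t),s-1}\|_2^2$ term in the statement. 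You do acknowledge ``lower-order consensus terms'' earlier, so this is fixable, but the inequality as written is wrong.
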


\begin{lemma}[Sum of outer loop gradient estimation error and consensus error]
\label{lemma:sum_of_outer_error} 
Assuming all conditions in \Cref{theorem:network_sarah_non_convex} hold.
We have
\begin{align*}
    \frac{64 L^2}{1 - \alphai} \cdot \Big( \frac{S}{npb} + 1 \Big) \sum_{t=0}^T \E \big\| \bx^{(t)} - \onet\bbx^{(t)} \big\|_2^2
    & + 2 \alphai^2 \sum_{t=0}^T \E \big\| \bs^{(t)} - \onet\bbs^{(t)} \big\|_2^2
     \leq \frac{11n}{25} \sum_{t=1}^T \sum_{s=0}^{S-1} \E \big\| \bbv^{(t), s} \big\|^2_2 .
\end{align*}
\end{lemma}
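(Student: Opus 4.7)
My plan is to establish the claim by developing two coupled one-step recursions—one for the consensus error $e_x^{(t)} := \E\|\bx^{(t)} - \onet\bbx^{(t)}\|_2^2$ and one for the gradient-tracking error $e_s^{(t)} := \E\|\bs^{(t)} - \onet\bbs^{(t)}\|_2^2$—then summing each over $t$ via geometric-series bounds enabled by $\alphai, \alphao < 1$, and finally using the step-size condition \eqref{eq:step_size_condition} to close the coupled system.

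To derive the consensus-error recursion, I would unroll the inner loop \eqref{eq:inner_loop_var} to write $\bx^{(t+1)} = \WKI^S \bx^{(t)} - \eta \sum_{s=0}^{S-1} \WKI^{S-s}\bv^{(t), s}$, apply the projection $\bI - \mean$, and invoke Fact \ref{lemma:prelim} together with the contraction $\|\WKI^k - \mean\|_{\mathsf{op}} \leq \alphai^k$. After squaring and applying Young's inequality this yields a bound of the form $e_x^{(t+1)} \leq (1+\gamma_1)\alphai^{2S} e_x^{(t)} + C_1 \eta^2 S \sum_{s} \alphai^{2(S-s)}\E\|\bv^{(t), s} - \onet\bbv^{(t),s}\|_2^2$. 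Analogously, applying $\bI - \mean$ to the outer-loop update \eqref{eq:gradient_tracking}, using $\|\WKO - \mean\|_{\mathsf{op}} \leq \alphao$ and Assumption \ref{assumption:lipschitz_gradient}, and then expressing $\|\bx^{(t)} - \bx^{(t-1)}\|_2^2$ via the same inner-loop unrolling gives $e_s^{(t)} \leq (1+\gamma_2)\alphao^2 e_s^{(t-1)} + C_2 \alphao^2 L^2 \eta^2 S \sum_s \E\|\bv^{(t-1), s}\|_2^2$ plus a term proportional to $e_x^{(t-1)}$.

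Summing these recursions from $t=0$ to $T$ and using geometric-series bounds such as $\sum_{t \geq 0} \alphai^{2tS} \leq (1-\alphai)^{-1}$ converts them into bounds on $\sum_t e_x^{(t)}$ and $\sum_t e_s^{(t)}$ in terms of $\sum_{t,s}\E\|\bv^{(t), s} - \onet\bbv^{(t), s}\|_2^2$ and $\sum_{t,s}\E\|\bv^{(t), s}\|_2^2$. Splitting $\bv^{(t), s} = \onet\bbv^{(t), s} + (\bv^{(t), s} - \onet\bbv^{(t), s})$ and controlling the deviation piece by the same inner-loop machinery underlying Lemma \ref{lemma:sum_of_inner_loop_errors}—which recursively bounds the consensus error of $\bv^{(t), s}$ through $e_x^{(t)}$, $e_s^{(t)}$ and $\E\|\bbv^{(t), s}\|_2^2$—reduces the right-hand side to $\sum_t e_x^{(t)}, \sum_t e_s^{(t)}$ and the desired quantity $\sum_{t,s}\E\|\bbv^{(t), s}\|_2^2$.

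The principal obstacle is closing the coupled linear system: $e_x$ feeds back into $e_s$ through the displacement $\|\bx^{(t)} - \bx^{(t-1)}\|$, and $e_s$ feeds back into $e_x$ through the consensus error of $\bv^{(t), 0} = \bs^{(t)}$. The step-size condition \eqref{eq:step_size_condition}—in particular the second and third clauses involving $(1-\alphai)^3/\alphai$ and $(1-\alphai)^{3/2}(1-\alphao)/(\alphai\alphao)$—is exactly what allows the residual $\sum_t e_x^{(t)}$ and $\sum_t e_s^{(t)}$ contributions on the right to be absorbed back into the left-hand side, leaving only the stated bound. Ensuring the constants track through to the announced coefficient $15\alphai^2\alphao^2/[(1-\alphai)^2(1-\alphao)^2]$ is where the careful bookkeeping lives.
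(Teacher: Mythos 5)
Your proposal follows essentially the same route as the paper: one-step recursions for the consensus error $\E\|\bx^{(t)}-\onet\bbx^{(t)}\|_2^2$ (via unrolling the inner loop) and for the tracking error $\E\|\bs^{(t)}-\onet\bbs^{(t)}\|_2^2$ (via the gradient-tracking update and the displacement decomposition), coupled into a $2\times 2$ linear system whose Neumann series is summed under the step-size condition, with the inner-loop consensus machinery of Lemma~\ref{lemma:sum_of_inner_loop_errors} invoked to reduce $\sum_s\E\|\bv^{(t),s}-\onet\bbv^{(t),s}\|_2^2$ to the stated quantities. The ingredients, the role of each clause of \eqref{eq:step_size_condition}, and the identified bookkeeping burden all match the paper's argument.
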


Using \Cref{lemma:sum_of_inner_loop_errors},
\eqref{eq:bound_of_output_gradient_2} can be bounded as follows:
\begin{align}
    n TS \E \big\| &\nabla f(\xout) \big\|_2^2
     < \frac{4n}{\eta} \Big( \E[ f(\bbx^{(t), 0})] - f^* \Big)
    - 2n \Big(\frac{24}{25} - \eta L \Big) \sum_{t=0}^T \sum_{s=1}^{S} \E \big\| \bbv^{(t), s-1} \big\|^2_2 \notag \\
    & + \frac{64 L^2}{1 - \alphai} \cdot \Big( \frac{S}{npb} + 1 \Big) \sum_{t=0}^T \E \big\| \bx^{(t)} - \onet\bbx^{(t)} \big\|_2^2 + 2 \alphai^2 \sum_{t=0}^T \E \big\| \bs^{(t)} - \onet\bbs^{(t)} \big\|_2^2 ,
    \label{eq:proof_of_theorem_2}
\end{align}
where we bound the sum of inner loop errors
$L^2 \sum_{s=0}^{S-1} \E \big\| \bu^{(t), s} - \onet\bbu^{(t), s} \big\|_2^2$
and $n \sum_{s=0}^{S-1} \E \big\| \nabla f(\bbu^{(t), s}) - \bbv^{(t), s} \big\|^2_2$
by the initial value of each inner loop $\E \big\| \bx^{(t)} - \onet\bbx^{(t)} \big\|_2^2$ and $\E \big\| \bs^{(t)} - \onet\bbs^{(t)} \big\|_2^2$,
and the summation of the norm of average inner loop gradient estimator $n \sum_{s=1}^{S} \E \big\| \bbv^{(t), s-1} \big\|^2_2$.

By \Cref{lemma:sum_of_outer_error},
\eqref{eq:proof_of_theorem_2} can be further bounded as
\begin{align}
    n TS \E \big\| \nabla f(\xout) \big\|_2^2
    &\leq \frac{4n}{\eta} \Big( \E[ f(\bbx^{(t), 0})] - f^* \Big)
    - 2 n \Big(\frac{37}{50} - \eta L \Big) \sum_{t=1}^T \sum_{s=0}^{S-1} \E \big\| \bbv^{(t), s} \big\|^2_2 \notag \\
    &< \frac{4n}{\eta} \Big( \E[ f(\bbx^{(t), 0})] - f^* \Big) , \notag
\end{align}
which concludes the proof.
\section{Proof of Corollary \ref{corollary:network_sarah_non_convex_two}}
\label{sec:proof_of_corollay_1}

Without loss of generality, we assume $n \geq 2$. Otherwise, the problem reduces to the centralized setting with a single agent $n=1$, and the bound holds trivially.
We will confirm the choice of parameters in \Cref{corollary:network_sarah_non_convex_two} in the following paragraphs,
and finally obtain the IFO complexity and communication complexity.

\paragraph{Step size $\eta$.}%
We first assume $\alphai \leq \frac p2 \leq \frac12$ and $\alphao \leq \frac{1}{\sqrt{npb} + 1} \leq \frac12$,
which will be proved to hold shortly,
then we can verify the step size choice meets the requirement in \eqref{eq:step_size_condition} as:
\begin{align*}
     \frac{(1 - \alphai)^3 (1 - \alphao)}{1 + \alpha^{\KI} \alpha^{\KO} \sqrt{p n b}} \cdot \frac{1}{10 L \big( \sqrt{S/(npb)} + 1 \big)}
    & \geq \frac{(1/2)^{4}}{2} \cdot \frac{1}{20 L}
    = \frac{1}{640 L}.
\end{align*}

\paragraph{Mixing steps $\KI$ and $\KO$.}%
\label{par:mixing_steps_ki_and_ko_}

Using Chebyshev's acceleration \cite{arioli2014chebyshev} to implement the mixing steps, it amounts to an improved mixing rate of $ \alpha_{\mathsf{cheb}} \asymp 1 - \sqrt{2(1 - \alpha)}$, when the original mixing rate $\alpha$ is close to $1$.
Set $\KI = \left\lceil \frac{\log (2 / p)}{\sqrt{1 - \alpha}} \right\rceil$ and $\KO =\left\lceil \frac{\log ( \sqrt{npb} + 1)}{\sqrt{1 - \alpha}} \right\rceil$. We are now positioned to examine the effective mixing rate
$\alphai = \alpha_{\mathsf{cheb}}^{\KI}$ and $\alphao = \alpha_{\mathsf{cheb}}^{\KO}$, as follows
\begin{equation*}
\alphao = \alpha_{\mathsf{cheb}}^{\KO}
\overset{\mathrm{(i)}}{\leq}
\alpha_{\mathsf{cheb}}^{   \frac{\log ( \sqrt{npb} + 1)}{\sqrt{1 - \alpha}}  }
\asymp \alpha_{\mathsf{cheb}}^{   \frac{\sqrt 2 \log ( \sqrt{npb} + 1)}{1 - \alpha_{\mathsf{cheb}}}  }
\overset{\mathrm{(ii)}}{\leq} \alpha_{\mathsf{cheb}}^{\frac{\sqrt 2 \log ( \sqrt{npb} + 1)}{- \log\alpha_{\mathsf{cheb}}}}
< \frac{1}{\sqrt{npb} + 1} \overset{\mathrm{(iii)}}{\leq} \frac{1}{2},
\end{equation*}
where (i) follows from $\KO =\left\lceil \frac{\log ( \sqrt{npb} + 1)}{\sqrt{1 - \alpha}} \right\rceil$,
(ii) follows from $\log x \leq x - 1$, $\forall x > 0$,
and (iii) follows from $n \geq 1$ and $b \geq 1$.
By a similar argument, we have $\alphai =\alpha_{\mathsf{cheb}}^{\KI} \leq \frac p2$.

\paragraph{Complexity.}
Plugging in the selected parameters into \eqref{eq:theorem_1} in \Cref{theorem:network_sarah_non_convex},
We have
\begin{align*}
    \E \big\| \nabla f(\xout) \big\|_2^2
    \leq& \frac{4}{\eta TS} \Big( \E[ f(\bbx^{(t), 0})] - f^* \Big)
    = O\Big( \frac{L}{T \sqrt{mn}} \Big) .
\end{align*}
Consequently, the outer iteration complexity is 
$T = O\Big(1 + \frac {L}{(mn)^{1/2}\epsilon} \Big).$
With this in place, we summarize the communication and IFO complexities as follows:
\begin{itemize}
    \item The communication complexity is $T \cdot (S \KI + \KO)
        = O\Big( \frac{(mn)^{1/2} \log \big( 2 (n/m)^{1/2} + 2 \big) + \log \big( (mn)^{1/4} + 1 \big)}{\sqrt{1 - \alpha}} \cdot \Big( 1 + \frac {L}{(mn)^{1/2}\epsilon} \Big) \Big)
        = O\Big(\frac{\log \big((n/m)^{1/2} + 2 \big)}{\sqrt{1 - \alpha}} \cdot \big( (mn)^{1/2} + \frac {L}{\epsilon} \big) \Big)$,
        where we use
        $
        2/p
        = \frac{2 \left\lceil \sqrt{m/n} \right\rceil}{\sqrt{m / n}}
        \leq \frac{2 ( \sqrt{m/n} + 1)}{\sqrt{m / n}}
        = 2 (\sqrt{n / m} + 1)
        $ to bound $\KI$.

    \item The IFO complexity is $T \cdot (Spb + 2m)  = O\Big( m + \frac{(m/n)^{1/2} L}{\epsilon} \Big)$.
\end{itemize}

\section{Proof of Lemma \ref{lemma:sum_of_inner_loop_errors}}%
\label{sec:proof_of_lemma:sum_of_inner_loop_errors}

This section proves \Cref{lemma:sum_of_inner_loop_errors}.
\Cref{sub:sum_of_inner_gradients,sub:sum_of_inner_consensus}
bounds the expected inner loop gradient estimation error and consensus errors by their previous values and the sum of inner loop gradient estimator's norms,
\Cref{sub:inner_linear_system} then creates a linear system to compute the summation of inner loop errors using their initial values of each inner loop,
which concludes the proof.

\subsection{Sum of inner loop gradient estimation errors}%
\label{sub:sum_of_inner_gradients}

To begin with, note that the gradient estimation error at the $s$-th inner loop iteration can be written as
\begin{align}
    &~~~~\E \big\| \nabla f(\bbu^{(t), s}) - \bbv^{(t), s} \big\|_2^2 \notag \\
    &= \E \Big\| \ave \Big( \nabla F(\onet\bbu^{(t), s}) - \bv^{(t), s} \Big) \Big\|_2^2 \notag \\
  &  = \E \Big\| \ave \big( \nabla F(\onet\bbu^{(t), s}) - \nabla F(\bu^{(t), s}) \big)
    + \ave \big( \nabla F(\bu^{(t), s}) - \bv^{(t), s} \big) \Big\|_2^2 \notag \\
 &   \leq 2 \E \Big\| \ave \big( \nabla F(\onet\bbu^{(t), s}) - \nabla F(\bu^{(t), s}) \big) \Big\|_2^2
    + 2 \E \Big\| \ave \big( \nabla F(\bu^{(t), s}) - \bv^{(t), s} \big) \Big\|_2^2 \notag \\
 &   \leq \frac{2 L^2}{n} \E \big\| \bu^{(t), s} - \onet\bbu^{(t), s} \big\|_2^2
    + 2 \E \Big\| \ave \big( \nabla F(\bu^{(t), s}) - \bv^{(t), s} \big) \Big\|_2^2,
    \label{eq:sum_of_inner_gradients_1}
\end{align}
where the first equality follows from \eqref{eq:def_gradient}, and the last inequality is due to \Cref{assumption:lipschitz_gradient}. To continue, 
the expectation of the second term in \eqref{eq:sum_of_inner_gradients_1} can be bounded as
\begin{align}
    &~~~~ \E \Big\| \ave \Big( \nabla F(\bu^{(t), s}) - \bv^{(t), s} \Big) \Big\|_2^2 \notag \\
    & = \E \Big\| \ave \Big( \big( \nabla F(\bu^{(t), s}) - \bv^{(t), s} \big) - \big( \nabla F(\bu^{(t), s-1}) - \bv^{(t), s-1} \big) + \big( \nabla F(\bu^{(t), s-1}) - \bv^{(t), s-1} \big) \Big) \Big\|_2^2 \notag \\
    & \overset{\text{(i)}}{=} \E \Big\| \ave \Big( \big( \nabla F(\bu^{(t), s}) - \bv^{(t), s} \big) - \big( \nabla F(\bu^{(t), s-1}) - \bv^{(t), s-1} \big) \Big) \Big\|_2^2 \notag \\
    &\qquad + \E \Big\| \ave \big( \nabla F(\bu^{(t), s-1}) - \bv^{(t), s-1} \big) \Big\|_2^2 \notag \\
    & \overset{\text{(ii)}}{=} \sum_{k=1}^s \E \Big\| \ave \Big( \big( \nabla F(\bu^{(t), k}) - \bv^{(t), k} \big) - \big( \nabla F(\bu^{(t), k-1}) - \bv^{(t), k-1} \big) \Big) \Big\|_2^2 \notag \\
    & \qquad + \E \Big\| \ave \big( \nabla F(\bu^{(t), 0}) - \bv^{(t), 0} \big) \Big\|_2^2  \notag \\
    & \overset{\text{(iii)}}{=} \sum_{k=1}^s \E \Big\| \ave \Big( \big( \nabla F(\bu^{(t), k}) - \bv^{(t), k} \big) - \big( \nabla F(\bu^{(t), k-1}) - \bv^{(t), k-1} \big) \Big) \Big\|_2^2 .
    \label{eq:sum_of_inner_gradients_2}
\end{align}
Here, (i) follows from the expectation with respect to the activating indicator $\lambda_i^{(t), s}$ and random samples $\mathcal{Z}^{(t),s}$, conditioned on $\bu^{(t), s-1}$ and $\bv^{(t), s-1}$:
\begin{align}
    &~~~~\E \left[  \ave \big( \nabla F(\bu^{(t), s}) - \bv^{(t), s} \big) \Big| \bu^{(t), s-1},\bv^{(t), s-1} \right] \notag \\
    &= \frac1n \sum_{i=1}^n \nabla f_i(\bu_i^{(t), s}) - \E \Bigg[ \frac{1}{npb} \sum_i \lambda_i^{(t), s} \sum_{\bz_i \in \cZ_i^{(t), s}} \Big( \nabla \ell (\bu_i^{(t), s}; \bz_i) 
                                  - \nabla \ell (\bu_i^{(t), s-1}; \bz_i) \Big) \Big| \bu^{(t), s-1},\bv^{(t), s-1} \Bigg] - \bbv^{(t), s-1} \notag \\
    &= \frac1n \sum_{i=1}^n \nabla f_i(\bu_i^{(t), s}) - \frac{1}{np} \sum_i \E \big[\lambda_i^{(t), s} \big] \Big( \nabla f_i(\bu_i^{(t), s}) - \nabla f_i(\bu_i^{(t), s-1}) \Big) - \bbv^{(t), s-1} \notag \\
    &=   \frac1n \sum_{i=1}^n \nabla f_i(\bu_i^{(t), s-1}) - \bbv^{(t), s-1} \notag \\
    &=  \ave \big( \nabla F(\bu^{(t), s-1}) - \bv^{(t), s-1} \big), \label{eq:zero_cross_term}
\end{align}
(ii) follows by recursively applying the relation obtained from (i); and (iii) follows from the property of gradient tracking, i.e.
\begin{align} \label{eq:gradient_tracking_property}
\ave  \nabla F(\bu^{(t), 0})  &   = \frac1n \sum_{i=1}^n \nabla f_i(\bu_i^{(t), 0})= \frac1n \sum_{i=1}^n \nabla f_i(\x_i^{(t)}) = \bbs^{(t)} =\bbv^{(t), 0} ,
\end{align}
which leads to $ \ave \big( \nabla F(\bu^{(t), 0}) - \bv^{(t), 0} \big)  = \bm0$.

We now continue to bound each term in \eqref{eq:sum_of_inner_gradients_2}, which can be viewed as the variance of the stochastic gradient, as
\begin{align}
    &~~~~ \E \Big\| \ave \Big( \big( \nabla F(\bu^{(t), s}) - \bv^{(t), s} \big) - \big( \nabla F(\bu^{(t), s-1}) - \bv^{(t), s-1} \big) \Big) \Big\|_2^2  \notag \\
    & \overset{\text{(i)}}{=} \E \Bigg\| \frac{1}{nb} \sum_{i=1}^n \sum_{\bz_i \in \cZ_i^{(t), s}}  \Big( \big( \nabla f_i(\bu_i^{(t), s}) - \nabla f_i(\bu_i^{(t), s-1}) \big) - \frac{\lambda_i^{(t), s}}{p} \big( \nabla \ell(\bu_i^{(t), s}; \bz_i) - \nabla \ell(\bu_i^{(t), s-1}; \bz_i) \big) \Big) \Bigg\|_2^2 \notag \\
    & \overset{\text{(ii)}}{=} \frac{1}{n^2 b^2}  \sum_{i=1}^n \sum_{\bz_i \in \cZ_i^{(t), s}} \E \Big\| \big( \nabla f_i(\bu_i^{(t), s}) - \nabla f_i(\bu_i^{(t), s-1}) \big) - \frac{\lambda_i^{(t), s}}{p} \big( \nabla \ell(\bu_i^{(t), s}; \bz_i) - \nabla \ell(\bu_i^{(t), s-1}; \bz_i) \big) \Big\|_2^2 \notag \\
    & \overset{\text{(iii)}}{=} \frac{1}{n^2 p^2 b^2}  \sum_{i=1}^n \sum_{\bz_i \in \cZ_i^{(t), s}}  \E \Big[ \big(\lambda_i^{(t), s} \big)^2 \Big] \E \big\| \nabla \ell(\bu_i^{(t), s}; \bz_i) - \nabla \ell(\bu_i^{(t), s-1}; \bz_i) \big\|_2^2
    - \frac{1}{n^2 b} \E \big\| \nabla F(\bu^{(t), s}) - \nabla F(\bu^{(t), s-1}) \big\|_2^2 \notag \\
    & \leq \frac{L^2}{n^2 pb} \E \big\| \bu^{(t), s} - \bu^{(t), s-1} \big\|_2^2 , \label{eq:var_stoc_grad}
\end{align}
where (i) follows from the update rules \eqref{eq:inner_loop_sg} and \eqref{eq:inner_loop_grad}, (ii) follows from the independence of samples and $\E \big[ \lambda_i^{(t), s} \big] = p$, (iii) follows from similar argument with \eqref{eq:zero_cross_term}, and the last inequality follows from \Cref{assumption:lipschitz_gradient} and $\E \Big[ \big(\lambda_i^{(t), s} \big)^2 \Big] = p$. 

In view of \eqref{eq:inner_loop_var}, the difference between inner loop variables in \eqref{eq:var_stoc_grad} can be bounded deterministically as
\begin{align}
    &~~~~ \big\| \bu^{(t), s} - \bu^{(t), s-1} \big\|_2^2 \notag \\
    &= \big\| \WKI ( \bu^{(t), s-1} - \eta \bv^{(t), s-1} ) - \bu^{(t), s-1} \big\|_2^2 \notag \\
    & \overset{\text{(i)}}{=} \Big\| \Big( \WKI - \bI_{nd} \Big) ( \bu^{(t), s-1} - \onet \bbu^{(t), s-1} ) \notag \\
    &  \qquad - \eta \Big( \WKI - \mean \Big) (\bv^{(t), s-1} - \onet\bbv^{(t), s-1} )
    - \eta \onet\bbv^{(t), s-1} \Big\|_2^2 \notag \\
    & \overset{\text{(ii)}}{=} \Big\| \Big( \WKI - \bI_{nd} \Big) ( \bu^{(t), s-1} - \onet \bbu^{(t), s-1} ) \notag \\
    &  \qquad  - \eta \Big( \WKI - \mean \Big) (\bv^{(t), s-1} - \onet\bbv^{(t), s-1} ) \Big\|_2^2
    + \eta^2 n \big\| \bbv^{(t), s-1} \big\|_2^2 \notag \\
    & \leq 8 \big\| \bu^{(t), s-1} - \onet \bbu^{(t), s-1} \|_2^2
    + 2 \alphai^2 \eta^2 \big\| \bv^{(t), s-1} - \onet\bbv^{(t), s-1} \big\|_2^2
    + \eta^2 n \big\| \bbv^{(t), s-1} \big\|_2^2 ,
    \label{eq:sum_of_inner_gradients_10}
\end{align}
where (i) and (ii) follow from $\big(\WKI - \bI_{nd} \big) ( \onet \bbx ) = \bm{0} $ and $\big( \WKI - \mean \big) (\onet\bbx ) = \bm{0} $ for any mean vector $\bbx$;  and the last inequality follows from the property of the mixing matrix $\big\| \WKI- \bI_{nd} \big\|_{\mathsf{op}} \leq 2$ and $\big\| \WKI - \mean \big\|_{\mathsf{op}} \leq \alphai$.

Plugging \eqref{eq:var_stoc_grad} and \eqref{eq:sum_of_inner_gradients_10} into 
\eqref{eq:sum_of_inner_gradients_2}, we can further obtain 
\begin{align*}
    &~~~~ \E \Big\| \Big( \mean \Big) \Big( \nabla F(\bu^{(t), s}) - \bv^{(t), s} \Big) \Big\|_2^2 \\
    &\leq \frac{L^2}{n^2 pb} \sum_{k=1}^s \E \big\| \bu^{(t), k} - \bu^{(t), k-1} \big\|^2_2 \\
    &\leq \frac{8 L^2}{n^2 pb} \sum_{k=0}^{s-1} \E \big\| \bu^{(t), k} - \onet \bbu^{(t), k} \|_2^2
     + \frac{2 \alphai^2 \eta^2 L^2}{n^2 pb} \sum_{k=0}^{s-1} \E \big\| \bv^{(t), k} - \onet\bbv^{(t), k} \big\|_2^2
    + \frac{\eta^2 L^2}{npb} \sum_{k=0}^{s-1} \E \big\| \bbv^{(t), k} \big\|_2^2 .
\end{align*}
Using \eqref{eq:sum_of_inner_gradients_1} and the previous inequality,
we can bound the summation of inner loop gradient estimation errors as
\begin{align*}
    &~~~~ \sum_{s=0}^{S-1} \E \big\| \nabla f(\bbu^{(t), s}) - \bbv^{(t), s} \big\|_2^2 \\
    & \leq \frac{2 L^2}{n}  \sum_{s=0}^{S-1} \E \big\| \bu^{(t), s} - \onet\bbu^{(t), s} \big\|_2^2
    + 2 \sum_{s=0}^{S-1}  \E \Big\| \ave \big( \nabla F(\bu^{(t), s}) - \bv^{(t), s} \big) \Big\|_2^2 \\
    & \leq \frac{2 L^2}{n} \sum_{s=0}^{S-1} \E \big\| \bu^{(t), s} - \onet\bbu^{(t), s} \big\|_2^2
    + \frac{16 L^2}{n^2 pb} \sum_{s=0}^{S-1} \sum_{k=0}^{s-1} \E \big\| \bu^{(t), k} - \onet \bbu^{(t), k} \big\|_2^2 \\
    & \qquad + \frac{4 \alphai^2 \eta^2 L^2}{n^2 pb} \sum_{s=0}^{S-1} \sum_{k=0}^{s-1}  \E \big\| \bv^{(t), k} - \onet\bbv^{(t), k} \big\|_2^2
    + \frac{2 \eta^2 L^2}{n pb} \sum_{s=0}^{S-1} \sum_{k=0}^{s-1} \E \big\| \bbv^{(t), k} \big\|_2^2 \\
    & \leq \Big( \frac{8S}{npb} + 1 \Big) \cdot \frac{2 L^2}{n} \sum_{s=0}^{S-1} \E \big\| \bu^{(t), s} - \onet\bbu^{(t), s} \big\|_2^2 \\
    &\qquad + \frac{4 S \alphai^2 \eta^2 L^2}{n^2 pb} \sum_{s=0}^{S-1} \E \big\| \bv^{(t), s} - \onet\bbv^{(t), s} \big\|_2^2
    + \frac{2 S \eta^2 L^2}{n pb} \sum_{s=0}^{S-1} \E \big\| \bbv^{(t), s} \big\|_2^2 ,
\end{align*}
where the last inequality is obtained by relaxing the upper bound of the summation w.r.t. $k$ from $s - 1$ to $S - 1$.

The quantity of interest can be now bounded as
\begin{align}
    2 L^2 \sum_{s=0}^{S-1} \E \big\| \bu^{(t), s} - &\onet\bbu^{(t), s} \big\|_2^2
    + 2n \sum_{s=0}^{S-1} \E \big\| \nabla f(\bbu^{(t), s}) - \bbv^{(t), s} \big\|^2_2 \notag \\
    & \leq \Big( \frac{4S}{npb} + 1 \Big) \cdot 8 L^2 \sum_{s=0}^{S-1} \E \big\| \bu^{(t), s} - \onet\bbu^{(t), s} \big\|_2^2 \notag \\
    &\qquad + \frac{8 S \alphai^2 \eta^2 L^2}{npb} \sum_{s=0}^{S-1} \E \big\| \bv^{(t), s} - \onet\bbv^{(t), s} \big\|_2^2
    + \frac{4 S \eta^2 L^2}{pb} \sum_{s=0}^{S-1} \E \big\| \bbv^{(t), s} \big\|_2^2
    \label{eq:lemma:sum_of_inner_loop_errors_1}
\end{align}

\subsection{Sum of inner loop consensus errors}%
\label{sub:sum_of_inner_consensus}
Using the update rule \eqref{eq:inner_loop_var},
the variable consensus error can be expanded deterministically as follows:
\begin{align}
 \big\| \bu^{(t), s} - \onet\bbu^{(t), s} \big\|_2^2 
  &  = \Big\| \Big( \bI_{nd} - \mean \Big) \WKI ( \bu^{(t), s-1} - \eta \bv^{(t), s-1} ) \Big\|_2^2 \notag \\
  &  \overset{\text{(i)}}{\leq} \alphai^2 \Big\| \Big( \bI_{nd} - \mean \Big) ( \bu^{(t), s-1} - \eta \bv^{(t), s-1} ) \Big\|_2^2 \notag \\
   & \leq \frac{2 \alphai^2}{1 + \alphai^2} \big\| \bu^{(t), s-1} - \onet\bbu^{(t), s-1} \big\|_2^2
    + \frac{2 \alphai^2 \eta^2}{1 - \alphai^2} \big\| \bv^{(t), s-1} - \onet\bbv^{(t), s-1} \big\|_2^2 ,
    \label{eq:sum_of_inner_consensus_1}
\end{align}
where (i) follows from the fact
$$\Big( \bI_{nd} - \mean \Big) \W = \Big(\bW \otimes \bI_d - \mean \Big) \Big( \bI_{nd} - \mean \Big)$$
and the definition of the mixing rate.
The last inequality follows from the elementary inequality
$2\langle \bm{a} , \bm{b} \rangle  \leq \frac{1-\alphai^2}{1+\alphai^2} \| \bm{a}\|_2^2 + \frac{1+\alphai^2}{1-\alphai^2}  \| \bm{b}\|_2^2 $,
so that $\| \ba + \bb \|_2^2 \leq \frac{2}{1 + \alphai^2} \| \ba \|_2^2 + \frac{2}{1 - \alphai^2} \| \bb \|_2^2 $.

Furthermore, using the update rules \eqref{eq:inner_loop_sg} and \eqref{eq:inner_loop_grad}
and defining $\bLambda^{(t), s} = \frac1p \text{diag}(\lambda_1^{(t), s}, \lambda_2^{(t), s}, \hdots, \lambda_n^{(t), s}) \otimes \bI_d$,
the gradient consensus error can be similarly expanded as follows:
\begin{align}
    &~~~~ \big\| \bv^{(t), s} - \onet\bbv^{(t), s} \big\|_2^2 \notag \\
    & = \Big\| \Big( \bI_{nd} - \mean \Big) \WKI \bg^{(t),s} \Big\|_2^2 \notag \\
    & \leq \alphai^2 \Big\| \Big( \bI_{nd} - \mean \Big)\bg^{(t),s}  \Big\|_2^2 \notag \\
    & \overset{\text{(i)}}{\leq} \frac{2 \alphai^2}{1 + \alphai^2} \big\| \bv^{(t), s-1} - \onet\bbv^{(t), s-1} \big\|_2^2 \notag \\
    & \qquad +\frac{2 \alphai^2}{1 - \alphai^2} \cdot \frac1b \sum_{\bz \in \cZ^{(t), s}} \Big\| \Big( \bI_{nd} - \mean \Big) \bLambda^{(t), s} \big( \nabla \ell (\bu^{(t), s}; \bz) - \nabla \ell (\bu^{(t), s-1}; \bz ) \big)   \Big\|_2^2 \notag \\
    & \overset{\text{(ii)}}{\leq} \frac{2 \alphai^2}{1 + \alphai^2} \big\| \bv^{(t), s-1} - \onet\bbv^{(t), s-1} \big\|_2^2
    +\frac{2 \alphai^2 L^2}{(1 - \alphai^2)p^2} \big\| \bu^{(t), s} - \bu^{(t), s-1} \big\|_2^2 \notag \\
    & \overset{\text{(iii)}}{\leq} \frac{2 \alphai^2}{1 + \alphai^2} \big\| \bv^{(t), s-1} - \onet\bbv^{(t), s-1} \big\|_2^2
    +\frac{2 \alphai^2 L^2}{(1 - \alphai^2)p^2} \Big( 
        8 \big\| \bu^{(t), s-1} - \onet \bbu^{(t), s-1} \|_2^2 \notag \\
    &\qquad  + 2 \alphai^2 \eta^2 \big\| \bv^{(t), s-1} - \onet\bbv^{(t), s-1} \big\|_2^2
    + \eta^2 n \big\| \bbv^{(t), s-1} \big\|_2^2
    \Big) \notag \\
    & = \Big( \frac{2 \alphai^2}{1 + \alphai^2} + \frac{4 \alphai^4 \eta^2 L^2}{(1 - \alphai^2)p^2} \Big) \big\| \bv^{(t), s-1} - \onet\bbv^{(t), s-1} \big\|_2^2 \notag \\
    & \qquad + \frac{16 \alphai^2 L^2}{(1 - \alphai^2)p^2} \big\| \bu^{(t), s-1} - \onet\bbu^{(t), s-1} \big\|_2^2
    + \frac{2 \alphai^2 \eta^2 L^2}{(1 - \alphai^2)p^2} \cdot n \big\| \bbv^{(t), s-1} \big\|_2^2 ,
    \label{eq:sum_of_inner_consensus_2}
\end{align}
where the second term in (i) is obtained by Jensen's inequality,
(ii) follows from \Cref{assumption:lipschitz_gradient} and $\big\| \bLambda^{(t), s} \big\|_{\mathsf{op}} \leq \frac1p$,
and (iii) follows from \eqref{eq:sum_of_inner_gradients_10}.

\subsection{Linear system}%
\label{sub:inner_linear_system}

Let $\be^{(t), s} =
\begin{bmatrix}
    L^2 \E \big\| \bu^{(t), s} - \onet\bbu^{(t), s} \big\|_2^2 \\
    \E \big\| \bv^{(t), s} - \onet\bbv^{(t), s} \big\|_2^2
\end{bmatrix}$,
and
$\bb^{(t), s}
= \frac{2 \alphai^2 \eta^2 L^2}{(1 - \alphai^2)p^2}
\begin{bmatrix}
    0 \\
    n \E \big\| \bbv^{(t), s} \big\|_2^2
\end{bmatrix}
$.
By taking expectation of \eqref{eq:sum_of_inner_consensus_1} and \eqref{eq:sum_of_inner_consensus_2},
we can construct the following linear system
\begin{align}
    \be^{(t), s}
  &  \leq
    \begin{bmatrix}
        \frac{2 \alphai^2}{1 + \alphai^2} & \frac{2 \alphai^2 \eta^2 L^2}{1 - \alphai^2} \\[0.5em]
        \frac{16\alphai^2}{(1 - \alphai^2)p^2} & \frac{2 \alphai^2}{1 + \alphai^2} + \frac{4 \alphai^4 \eta^2 L^2}{(1 - \alphai^2)p^2}
    \end{bmatrix} 
    \be^{(t), s-1}
    + \bb^{(t), s-1}  \notag \\
  &  \leq
  \underbrace{  \begin{bmatrix}
        \alphai & \frac{2 \alphai^2 \eta^2 L^2}{1 - \alphai} \\[0.5em]
        \frac{16\alphai^2}{(1 - \alphai)p^2} & \alphai + \frac{4 \alphai^4 \eta^2 L^2}{(1 - \alphai)p^2}
    \end{bmatrix} }_{=:\bGI}
    \be^{(t), s-1}
    + \bb^{(t), s-1}   =
    \bGI \be^{(t), s-1} + \bb^{(t), s-1} , \label{eq:def_GI}
\end{align}
where the second inequality is due to $2 \alphai < 1 + \alphai^2$ and $1+\alphai \geq 1$. 
Telescope the above inequality to obtain
\begin{align}
    \be^{(t), s}
    \leq&
    \bGI^s \be^{(t), 0} + \sum_{k=1}^s \bGI^{s-k} \bb^{(t), k-1} .
    \label{eq:sum_of_inner_consensus_5}
\end{align}
Thus, the sum of the consensus errors can be bounded by 
\begin{align}
    \sum_{s=0}^{S-1} \be^{(t), s}
  &  \leq \be^{(t), 0} + \sum_{s=1}^{S-1} \Big( \bGI^s \be^{(t), 0} + \sum_{k=1}^s \bGI^{s-k} \bb_i^{(t), k-1} \Big) \notag \\
  &  = \sum_{s=0}^{S-1} \bGI^s \be^{(t), 0}
    + \sum_{s=1}^{S-1} \sum_{k=1}^s \bGI^{s-k} \bb^{(t), k-1} \notag \\
 &  \overset{\text{(i)}}{=} \sum_{s=0}^{S-1} \bGI^s \be^{(t), 0}
    + \sum_{k=1}^{S-1} \sum_{s=0}^{S-1-k} \bGI^{s} \bb^{(t), k-1} \notag \\
    & \overset{\text{(ii)}}{\leq}  \sum_{s=0}^{S-1} \bGI^s \be^{(t), 0}
    + \sum_{k=1}^{S-1} \sum_{s=0}^{S-1} \bGI^{s} \bb^{(t), k-1} \notag \\
    & \overset{\text{(iii)}}{\leq} \sum_{s=0}^{\infty} \bGI^s \Big( \be^{(t), 0}
    + \sum_{s=0}^{S-1} \bb^{(t), s} \Big)     \label{eq:sum_of_inner_consensus_3}
\end{align}
where (i) follows by changing the order of summation, (ii) and (iii) follows from the nonnegativity of $\bGI$ and $\bb^{(t), s}$ respectively.
To continue, we begin with the following claim about $\bGI$ which will be proved momentarily.

\begin{claim} \label{claim:spectral_GI}
Under the choice of $\eta$ in Theorem~\ref{theorem:network_sarah_non_convex}, the eigenvalues of $\bGI$ are in $(-1,1)$, and the Neumann series converges, 
\begin{align}  
 \sum_{s=0}^\infty \bGI^s   =  (\bI_2 - \bGI)^{-1}  &  \leq 
    \begin{bmatrix}
        \frac{2}{1 - \alphai}
        & \frac{4 \alphai^2 \eta^2 L^2}{(1 - \alphai)^3} \\[0.5em]
        \frac{32\alphai^2}{(1 - \alphai)^3 p^2}
        & \frac{2}{1 - \alphai}
    \end{bmatrix} .
    \label{eq:sum_of_inner_gradients_9}
 \end{align}
\end{claim}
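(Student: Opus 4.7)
The plan is to establish the spectral bound and the entrywise inverse estimate by direct computation on the $2\times 2$ matrix $\bI_2 - \bGI$, relying on the step-size condition \eqref{eq:step_size_condition} to control the $\eta L$-dependent cross terms.

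\textbf{Step 1: set up and compute the determinant.} First, I would write out
\begin{align*}
\bI_2 - \bGI = \begin{bmatrix}
1 - \alphai & -\dfrac{2\alphai^2\eta^2 L^2}{1 - \alphai} \\[0.6em]
-\dfrac{16\alphai^2}{1 - \alphai} & 1 - \alphai - \dfrac{4\alphai^4 \eta^2 L^2}{1 - \alphai}
\end{bmatrix},
\end{align*}
and compute
\begin{align*}
\det(\bI_2 - \bGI) = (1 - \alphai)^2 - 4\alphai^4\eta^2 L^2 - \frac{32\alphai^4 \eta^2 L^2}{(1-\alphai)^2}.
\end{align*}

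\textbf{Step 2: lower-bound the determinant via the step-size.} From \eqref{eq:step_size_condition} I would use the second branch, $\eta \leq \frac{(1-\alphai)^3}{10\alphai L}$, which gives $\alphai^2 \eta^2 L^2 \leq \frac{(1-\alphai)^6}{100}$. Substituting, both negative terms above are at most $O\bigl((1-\alphai)^4\bigr)$, so for $\alphai \in [0,1)$ one can show
\begin{align*}
\det(\bI_2 - \bGI) \geq \tfrac{1}{2}(1 - \alphai)^2 > 0,
\end{align*}
with plenty of slack. This rules out eigenvalue $1$; combined with the trace bound $\operatorname{tr}(\bGI) = 2\alphai + \frac{4\alphai^4\eta^2 L^2}{1-\alphai} \leq 2\alphai + \tfrac{(1-\alphai)^5}{25} < 2$, both eigenvalues of $\bGI$ are strictly less than $1$. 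Nonnegativity of entries and Perron--Frobenius then guarantee the eigenvalues lie in $(-1,1)$, so the Neumann series converges to $(\bI_2 - \bGI)^{-1}$.

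\textbf{Step 3: entrywise bound on the inverse.} Applying the $2\times 2$ adjugate formula,
\begin{align*}
(\bI_2 - \bGI)^{-1} = \frac{1}{\det(\bI_2 - \bGI)}\begin{bmatrix}
1 - \alphai - \dfrac{4\alphai^4\eta^2 L^2}{1 - \alphai} & \dfrac{2\alphai^2\eta^2 L^2}{1-\alphai} \\[0.6em]
\dfrac{16\alphai^2}{1-\alphai} & 1 - \alphai
\end{bmatrix}.
\end{align*}
Using the lower bound $\det(\bI_2 - \bGI) \geq \tfrac{1}{2}(1-\alphai)^2$ from Step~2 and bounding each numerator entry trivially (the $(1,1)$ entry is $\leq 1-\alphai$; the off-diagonals are already in the desired form; the $(2,2)$ entry is exactly $1-\alphai$), I would divide each entry by $\tfrac{1}{2}(1-\alphai)^2$ to recover the four claimed bounds, e.g.\ the diagonal entries become $\leq \frac{2}{1-\alphai}$ and the $(1,2)$ entry becomes $\leq \frac{4\alphai^2\eta^2 L^2}{(1-\alphai)^3}$.

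\textbf{Main obstacle.} The only real delicacy is the book-keeping in Step~2: the determinant has two negative perturbations, one of which carries a $1/(1-\alphai)^2$ factor that could be problematic in the poorly-connected regime where $\alphai$ is close to $1$. The key is that the step-size condition provides enough powers of $(1-\alphai)$ (namely $\eta^2 L^2 \lesssim (1-\alphai)^6/\alphai^2$) to absorb this factor with margin to spare, which is precisely why the third branch of \eqref{eq:step_size_condition}, scaling like $(1-\alphai)^{3/2}$, was built in. The rest of the argument is algebra.
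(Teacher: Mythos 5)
Your proposal is correct and follows essentially the same route as the paper: lower-bound $\det(\bI_2-\bGI)$ using the branch $\eta L\leq (1-\alphai)^3/(10\alphai)$ of \eqref{eq:step_size_condition}, conclude the spectrum lies in $(-1,1)$, and bound the inverse entrywise via the adjugate formula. The only cosmetic difference is that you localize the eigenvalues with a trace bound plus Perron--Frobenius whereas the paper evaluates the characteristic polynomial at $-1$, $1$, and $\alphai$; both yield the same conclusion from the same determinant estimate.
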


Let $\bm{\varsigma}_{\mathsf{in}}^{\top} = \begin{bmatrix}
8 \left( \frac{4 S}{pnb} + 1\right) & \frac{8 S \alphai^2 \eta^2 L^2}{p n b}
\end{bmatrix}$,
in view of \Cref{claim:spectral_GI},
the summation of consensus erros in \eqref{eq:lemma:sum_of_inner_loop_errors_1} can be bounded as
\begin{align*}
    &~~~~ \Big( \frac{4S}{npb} + 1 \Big) \cdot 8 L^2 \sum_{s=0}^{S-1} \E \big\| \bu^{(t), s} - \onet\bbu^{(t), s} \big\|_2^2
    + \frac{8 S \alphai^2 \eta^2 L^2}{npb} \sum_{s=0}^{S-1} \E \big\| \bv^{(t), s} - \onet\bbv^{(t), s} \big\|_2^2 \\
    &=\bm{\varsigma}_{\mathsf{in}}^{\top} \sum_{s=0}^{S-1} \be^{(t), s} \\
    &\leq \bm{\varsigma}_{\mathsf{in}}^{\top} \Big( \sum_{s=0}^\infty \bGI^s \Big) \Big( \be^{(t), 0}
    + \sum_{k=0}^{S-1} \bb^{(t), k} \Big) \\
    & \leq \bm{\varsigma}_{\mathsf{in}}^{\top}\big( \bI_2 - \bGI \big)^{-1}  \Big( \be^{(t), 0}
    + \sum_{s=0}^{S-1} \bb^{(t), s} \Big) ,
\end{align*}
and
\begin{align*}
  \bm{\varsigma}_{\mathsf{in}}^{\top}\big( \bI_2 - \bGI \big)^{-1}    &
  \leq
      \begin{bmatrix}
        \frac{16}{1 - \alphai} \Big( \frac{4S}{npb} + 1 \Big) + \frac{32 \alphai^2}{(1 - \alphai)^3 p^2} \cdot \frac{8 S \alphai^2 \eta^2 L^2}{p n b} \,
        & \, \frac{32 \alphai^2 \eta^2 L^2}{(1 - \alphai)^3} \cdot \Big( \frac{4S}{npb} + 1 \Big) + \frac{2}{1 - \alphai} \cdot \frac{8 S \alphai^2 \eta^2 L^2}{npb}
      \end{bmatrix} \\
    & \leq
    \begin{bmatrix}
        \frac{16}{1 - \alphai} \Big( \frac{4S}{npb} + 1 \Big) + \frac{3 \alphai^2}{(1 - \alphai)p^2} \,
        &\, \frac{128 \alphai^2 \eta^2 L^2}{(1 - \alphai)^3} \cdot \Big( \frac{S}{npb} + 1 \Big) + \frac{16 \alphai^2 (1 - \alphai)}{100}
    \end{bmatrix} \\
 &   \leq
    \begin{bmatrix}
        \frac{64}{1 - \alphai}  \Big( \frac{S}{npb} + 1 \Big) \,
        &\,  2\alphai^2
    \end{bmatrix},
\end{align*}
where we use \eqref{eq:step_size_condition},
$\eta L \leq \frac{(1 - \alphai)^3(1 - \alphao)}{10 \big(1 + \alphai \alphao \sqrt{npb} \big) \big( \sqrt{S/(npb)} + 1 \big)} \leq \frac{(1 - \alphai)^3(1 - \alphao)}{10 \big(\sqrt{S/(pnb)} + 1 \big) }$,
to prove the last two inequalities.

Therefore,
\eqref{eq:lemma:sum_of_inner_loop_errors_1}
can be bounded as
\begin{align*}
    2 L^2 \sum_{s=0}^{S-1} \E \big\| & \bu^{(t), s} - \onet\bbu^{(t), s} \big\|_2^2
    + 2n \sum_{s=0}^{S-1} \E \big\| \nabla f(\bbu^{(t), s}) - \bbv^{(t), s} \big\|^2_2 \notag \\
    &\leq \bm{\varsigma}_{\mathsf{in}}^{\top}\big( \bI_2 - \bGI \big)^{-1} \Big( \be^{(t), 0}
    + \sum_{s=0}^{S-1} \bb^{(t), s} \Big) 
    + \frac{4 S \eta^2 L^2}{pb} \sum_{s=0}^{S-1} \E \big\| \bbv^{(t), s} \big\|_2^2 \\
    & \leq \frac{64 L^2}{1 - \alphai} \cdot \Big( \frac{S}{npb} + 1 \Big) \E \big\| \bx^{(t)} - \onet\bbx^{(t)} \big\|_2^2 + 2 \alphai^2 \E \big\| \bs^{(t)} - \onet\bbs^{(t)} \big\|_2^2 \\
    &~~~~~~~~ + \Big(\frac{4\alphai^4 \eta^2 L^2}{(1 - \alphai)^2 p^2} + \frac{4 S \eta^2 L^2}{n p b} \Big) \cdot n \sum_{s=1}^{S} \E \big\| \bbv^{(t), s-1} \big\|^2_2 \\
    & < \frac{64 L^2}{1 - \alphai} \cdot \Big( \frac{S}{npb} + 1 \Big) \E \big\| \bx^{(t)} - \onet\bbx^{(t)} \big\|_2^2 + 2 \alphai^2 \E \big\| \bs^{(t)} - \onet\bbs^{(t)} \big\|_2^2
    + \frac{2n}{25} \sum_{s=1}^{S} \E \big\| \bbv^{(t), s-1} \big\|^2_2,
\end{align*}
where the last inequality is proved by incorporating \eqref{eq:step_size_condition} as
$\frac{4\alphai^4 \eta^2 L^2}{(1 - \alphai)^2 p^2}
\leq \frac{4\alphai^2 \eta^2 L^2}{(1 - \alphai)^2}
< \frac{4\alphai^2}{(1 - \alphai)^2} \cdot
\frac{(1 - \alphai)^6}{100}
\leq \frac{1}{25}$
and
$
\frac{4 S \eta^2 L^2}{n p b}
\leq
\frac{S}{n p b} \cdot
\frac{4}{100 \big( \sqrt{S/(npb)} + 1 \big)^2}
< \frac{1}{25}
$.

\begin{proof}[Proof of Claim~\ref{claim:spectral_GI}]
By the definition of $\bGI$ in \eqref{eq:def_GI}, the characteristic polynomial of $\bGI$ is
\begin{align*}
    f(\lambda)
   & = (\alphai - \lambda) \Big( \alphai + \frac{4 \alphai^4 \eta^2 L^2}{(1 - \alphai) p^2} - \lambda \Big) - \frac{32 \alphai^4 \eta^2 L^2}{(1 - \alphai)^2 p^2} .
\end{align*}

By \eqref{eq:step_size_condition},
$\eta L
\leq \frac{(1 - \alphai)^3(1 - \alphao)}{10 \big(1 + \alphai \alphao \sqrt{npb} \big) \big( \sqrt{S/(npb)} + 1 \big)}
\leq \frac{(1 - \alphai)^3}{10}$ and $\alphai \leq p$,
we have $
\frac{32 \alphai^4 \eta^2 L^2}{(1 - \alphai)^2 p^2}
\leq \frac{32 \alphai^2 \eta^2 L^2}{(1 - \alphai)^2}
\leq \frac{32}{100} \alphai^2 (1 - \alphai)^4
< 1$,
so that $f(-1) \geq 1-\frac{32 \alphai^4 \eta^2 L^2}{(1 - \alphai)^2}   > 0$,
and
\begin{align*}
    f(1)
    &= (1 - \alphai)^2 - \frac{4 \alphai^4 \eta^2 L^2}{p^2} - \frac{32 \alphai^4 \eta^2 L^2}{(1 - \alphai)^2 p^2} \\
   & \geq (1 - \alphai)^2 - \frac{36 \alphai^4 \eta^2 L^2}{(1 - \alphai)^2 p^2} \\
   & > (1 - \alphai)^2 - \frac{36}{100}(1 - \alphai)^4
   > 0.
\end{align*}
Because $f(\alphai) \leq 0$,
all eigenvalues of $\bGI$ are in $(-1, 1)$,
then the Neumann series converges, yielding
\begin{align*}
    \sum_{s=0}^\infty \bGI^s
    & = 
    (\bI_2 - \bGI)^{-1}   \\
    & =
    \frac{1 - \alphai}{(1 - \alphai)^4 p^2 - 4 \big((1 - \alphai)^2 + 8 \big) \alphai^4 \eta^2 L^2}
    \begin{bmatrix}
        (1 - \alphai)^2 p^2 - 4 \alphai^4 \eta^2 L^2
        & 2 \alphai^2 \eta^2 L^2 p^2 \\[0.5em] 
        16 \alphai^2
        & (1 - \alphai)^2 p^2
    \end{bmatrix}   \\
    & \leq
    \frac{1 - \alphai}{(1 - \alphai)^4 p^2 - 4 \big((1 - \alphai)^2 + 8 \big) \alphai^4 \eta^2 L^2}
    \begin{bmatrix}
        (1 - \alphai)^2 p^2
        & 2 \alphai^2 \eta^2 L^2 p^2 \\[0.5em] 
        16 \alphai^2
        & (1 - \alphai)^2 p^2
    \end{bmatrix}   \\
    & \overset{\text{(i)}}{\leq}
    \frac{1 - \alphai}{(1 - \alphai)^4 p^2 - 36 \alphai^4 \eta^2 L^2}
    \begin{bmatrix}
        (1 - \alphai)^2 p^2
        & 2 \alphai^2 \eta^2 L^2 p^2 \\[0.5em] 
        16 \alphai^2
        & (1 - \alphai)^2 p^2
    \end{bmatrix}   \\
    & \overset{\text{(ii)}}{\leq}
    \begin{bmatrix}
        \frac{2}{1 - \alphai}
        & \frac{4 \alphai^2 \eta^2 L^2}{(1 - \alphai)^3} \\[0.5em] 
        \frac{32\alphai^2}{(1 - \alphai)^3 p^2}
        & \frac{2}{1 - \alphai}
    \end{bmatrix} ,
\end{align*}
where (i) and (ii) follow the fact $(1 - \alphai)^2 \leq 1$,
and $(1 - \alphai)^4 p^2 - 36 \alphai^4 \eta^2 L^2
\geq (1 - \alphai)^4 p^2 - \frac{36}{100} \alphai^4 (1 - \alphai)^6
\geq (1 - \alphai)^4 p^2 - \frac{36}{100} \alphai^2 (1 - \alphai)^6 p^2
> \frac{1}{2}(1 - \alphai)^4 p^2$
due to \eqref{eq:step_size_condition}.

\end{proof}

\section{Proof of Lemma \ref{lemma:sum_of_outer_error}}
\label{sub:proof_of_lemma:sum_of_outer_vars}

This section proves \Cref{lemma:sum_of_outer_error}.
In the following subsections,
\Cref{sub:sum_of_outer_consensus_estimation_error,sub:sum_of_outer_gradient_estimation_error} derive induction inequalities for the consensus errors
and \Cref{sub:linear_system} creates a linear system of consensus errors to compute the summation.

\subsection{Sum of outer loop variable consensus errors}
\label{sub:sum_of_outer_consensus_estimation_error}

The variable consensus error can be bounded deterministically as following,
\begin{align}
    &~~~~ \big\| \x^{(t)} -  \onet\bbx^{(t)}  \big\|_2^2 \notag \\
    &= \Big\| \Big(\bI_{nd} - \mean \Big) \x^{(t)} \Big\|_2^2 \notag \\
    & \overset{\text{(i)}}{=} \Big\| \Big(\bI_{nd} - \mean \Big) \bu^{(t-1), S} \Big\|_2^2 \notag \\
    &\overset{\text{(ii)}}{=} \Big\| \Big(\bI_{nd} - \mean \Big) \WKI \Big( \bu^{(t-1), S-1} - \eta \bv^{(t-1), S-1} \Big) \Big\|_2^2 \notag \\
  &  \leq \alphai^2 \Big\| \Big(\bI_{nd} - \mean \Big) \Big( \bu^{(t-1), S-1} - \eta \bv^{(t-1), S-1} \Big) \Big\|_2^2 \notag \\
    & \leq \frac{2 \alphai^2}{1 + \alphai^2} \big\| \bu^{(t-1), S-1} - \onet\bbu^{(t-1), S-1} \big\|_2^2 \notag
    + \frac{2 \alphai^2 \eta^2}{1 - \alphai^2} \big\| \bv^{(t-1), S-1} - \onet\bbv^{(t-1), S-1} \big\|_2^2 \notag ,
\end{align}
where (i) uses $\x^{(t)} = \bu^{(t-1), S}$, (ii) uses the update rule \eqref{eq:inner_loop_var}, and the last two inequalities follow from similar reasonings as \eqref{eq:sum_of_inner_consensus_1}.
Apply the same reasoning to $\frac{2 \alphai^2}{1 + \alphai^2} \big\| \bu^{(t-1), S-1} - \onet\bbu^{(t-1), S-1} \big\|_2^2$
and use $\frac{2\alphai^2}{1 + \alphai^2} \leq 1$,
we can prove
\begin{align}
 \big\| \x^{(t)} - \onet\bbx^{(t)} \big\|_2^2  
  &  \leq \Big( \frac{2 \alphai^2}{1 + \alphai^2} \Big)^S \big\| \bu^{(t-1), 0} - \onet\bbu^{(t-1), 0} \big\|_2^2
    + \frac{2 \alphai^2 \eta^2}{1 - \alphai^2} \sum_{s=0}^{S-1} \big\| \bv^{(t-1), s} - \onet\bbv^{(t-1), s} \big\|_2^2 \notag \\
   & = \Big( \frac{2 \alphai^2}{1 + \alphai^2} \Big)^S \big\| \x^{(t-1)} - \onet\bbx^{(t-1)} \big\|_2^2
   + \frac{2 \alphai^2 \eta^2}{1 - \alphai^2} \sum_{s=0}^{S-1} \big\| \bv^{(t-1), s} - \onet\bbv^{(t-1), s} \big\|_2^2 , \label{eq:sum_of_outer_error_sum_of_inner_erros}
\end{align}
where the last equality identifies $ \x^{(t-1)} = \bu^{(t-1), 0}$.

Take expectation of the previous inequality,
by \eqref{eq:sum_of_inner_consensus_3},
we can further compute the summation in \eqref{eq:sum_of_outer_error_sum_of_inner_erros} as follows
\begin{align*}
    \sum_{s=0}^{S-1} \E \big\| \bv^{(t-1), s} - \onet\bbv^{(t-1), s} \big\|_2^2
    & \leq \frac{32 \alphai^2 L^2}{(1 - \alphai)^3 p^2} \E \big\| \x^{(t-1)} - \onet\bbx^{(t-1)} \big\|_2^2 \notag \\
    &\qquad + \frac{2}{1 - \alphai} \Big( \E \big\| \bs^{(t-1)} - \onet\bbs^{(t-1)} \big\|_2^2 + \frac{2 \alphai^2 \eta^2 L^2}{(1 - \alphai^2) p^2} \cdot n \sum_{s=0}^{S-1} \E \big\| \bbv^{(t-1), s} \big\| \Big).
\end{align*}

Together with $\bx^{(t)} = \bu^{(t), 0}$ and $\bs^{(t)} = \bv^{(t), 0} $,
\eqref{eq:sum_of_outer_error_sum_of_inner_erros} can be further bounded as
\begin{align}
    \E \big\| \x^{(t)} - \onet\bbx^{(t)} \big\|_2^2
    & \leq \Bigg( \Big( \frac{2 \alphai^2}{1 + \alphai^2} \Big)^S + \frac{2 \alphai^2 \eta^2 L^2}{1 - \alphai^2} \cdot \frac{32 \alphai^2}{(1 - \alphai)^3 p^2} \Bigg) \E \big\| \x^{(t-1)} - \onet\bbx^{(t-1)} \big\|_2^2 \notag \\
    &\qquad + \frac{2 \alphai^2 \eta^2}{1 - \alphai^2} \cdot \frac{2}{1 - \alphai} \Big( \big\| \bs^{(t-1)} - \onet\bbs^{(t-1)} \E \big\|_2^2 + \frac{2 \alphai^2 \eta^2 L^2}{(1 - \alphai^2) p^2} \cdot n \sum_{s=0}^{S-1} \E \big\| \bbv^{(t-1), s} \big\| \Big) \notag \\
    &< \alphai \E \big\| \x^{(t-1)} - \onet\bbx^{(t-1)} \big\|_2^2 \notag \\
    &\qquad+ \frac{4 \alphai^2 \eta^2}{(1 - \alphai)^2} \Big( \E \big\| \bs^{(t-1)} - \onet\bbs^{(t-1)} \big\|_2^2 + \frac{2 \alphai^2 \eta^2 L^2}{(1 - \alphai) p^2} \cdot n \sum_{s=0}^{S-1} \E \big\| \bbv^{(t-1), s} \big\| \Big) .
    \label{eq:sum_of_outer_consensus_estimation_error_2}
\end{align}
The last inequality is obtained by using \eqref{eq:step_size_condition} and the fact that $0 \leq \alphai < 1$ as follows
\begin{align*}
    \Big( \frac{2 \alphai^2}{1 + \alphai^2} \Big)^S + \frac{2 \alphai^2 \eta^2 L^2}{1 - \alphai^2} \cdot \frac{32\alphai^2}{(1 - \alphai)^3 p^2}
    &= \Big( \frac{2 \alphai^2}{1 + \alphai^2} \Big)^S + \frac{\alphai^2 (1 - \alphai)^2}{1 + \alphai} \cdot \frac{64 \alphai^2 \eta^2 L^2}{(1 - \alphai)^6 p^2} \\
    &< \frac{2 \alphai^2}{1 + \alphai^2} + \frac{64}{100} \cdot \frac{\alphai^2 (1 - \alphai)^2}{1 + \alphai} \\
   & \leq \frac{2 \alphai^2}{1 + \alphai^2} + \frac{\alphai(1 - \alphai)^2}{1 + \alphai^2}  = \alphai .
\end{align*}

\subsection{Sum of outer loop gradient estimation consensus errors}
\label{sub:sum_of_outer_gradient_estimation_error}

In view of the update rule for the gradient tracking term \eqref{eq:gradient_tracking} and reorganize terms,
\begin{align}
    \big\| \bs^{(t)} - \onet \bbs^{(t)} \big\|_2^2
    &= \Big\| \Big( \bI_{nd} - \mean \Big) \bs^{(t)} \Big\|_2^2 \notag \\
   & = \Big\| \Big( \bI_{nd} - \mean \Big) \WKO \Big( \bs^{(t-1)} + \nabla F(\x^{(t)}) - \nabla F(\x^{(t-1)}) \Big) \Big\|_2^2 \notag \\
    & \leq \frac{2\alphao^2}{1 + \alphao^2} \big\| \bs^{(t-1)} - \onet\bbs^{(t-1)} \big\|_2^2 \notag \\
    &\qquad + \frac{2\alphao^2}{1 - \alphao^2} \Big\| \Big( \bI_{nd} - \mean \Big) \Big( \nabla F(\x^{(t)}) - \nabla F(\x^{(t-1)}) \Big) \Big\|_2^2 ,
    \label{eq:sum_of_outer_gradient_estimation_error_1}
\end{align}
which follows from similar reasonings as \eqref{eq:sum_of_inner_consensus_1}.
The second term can be further decomposed as
\begin{align}
   &~~~~ \Big\| \Big( \bI_{nd} - \mean \Big) \Big( \nabla F(\x^{(t)}) - \nabla F(\x^{(t-1)}) \Big) \Big\|_2^2 \notag \\
   & \leq \big\|  \nabla F(\x^{(t)}) - \nabla F(\x^{(t-1)}) \big\|_2^2 \notag\\
   & \leq L^2 \big\|
   ( \x^{(t)} - \onet\bbx^{(t)}) - ( \x^{(t-1)} -  \onet\bbx^{(t-1)}) 
   + ( \onet\bbx^{(t)} -  \onet\bbx^{(t-1)}) 
    \big\|_2^2 \notag \\
    & = L^2 \big\|
    ( \x^{(t)} - \onet\bbx^{(t)}) - ( \x^{(t-1)} -  \onet\bbx^{(t-1)})  \big\|_2^2
    + nL^2 \big\| \bbx^{(t)} -  \bbx^{(t-1)} \big\|_2^2 \notag \\
    &  \leq 2 L^2 \big\| \x^{(t)} - \onet\bbx^{(t)} \big\|_2^2
    + 2 L^2 \big\| \x^{(t-1)} - \onet\bbx^{(t-1)} \big\|_2^2
    + S \eta^2 L^2 \cdot n \sum_{s=0}^{S-1} \big\| \bbv^{(t-1), s} \big\|_2^2 ,
    \label{eq:sum_of_outer_gradient_estimation_error_2}
\end{align}
where the last line follows from the update rule \eqref{eq:inner_loop_var} by identifying $\bbx^{(t)} - \bbx^{(t-1)} =\eta \sum_{s=0}^{S-1}\bbv^{(t-1), s}$ and Cauchy-Schwartz inequality.

With \eqref{eq:sum_of_outer_gradient_estimation_error_2},
\eqref{eq:sum_of_outer_gradient_estimation_error_1} can be further bounded as follows
\begin{align}
    \big\| \bs^{(t)} - \onet \bbs^{(t)} \big\|_2^2
    & \leq \frac{2\alphao^2}{1 + \alphao^2} \big\| \bs^{(t-1)} - \onet\bbs^{(t-1)} \big\|_2^2
    + \frac{2\alphao^2}{1 - \alphao^2} \Big(
    2 L^2 \big\| \x^{(t)} - \onet\bbx^{(t)} \big\|_2^2 \notag \\
    & \qquad + 2 L^2 \big\| \x^{(t-1)} - \onet\bbx^{(t-1)} \big\|_2^2
    + S \eta^2 L^2 \cdot n \sum_{s=0}^{S-1} \big\| \bbv^{(t-1), s} \big\|_2^2
    \Big) \notag \\
    & \leq \alphao \big\| \bs^{(t-1)} - \onet\bbs^{(t-1)} \big\|_2^2
    + \frac{2\alphao^2}{1 - \alphao} \Big(
    2 L^2 \big\| \x^{(t)} - \onet\bbx^{(t)} \big\|_2^2 \notag \\
    & \qquad + 2 L^2 \big\| \x^{(t-1)} - \onet\bbx^{(t-1)} \big\|_2^2
    + S \eta^2 L^2 \cdot n \sum_{s=0}^{S-1} \big\| \bbv^{(t-1), s} \big\|_2^2
    \Big). \label{eq:fadsfasdfasdfasd}
\end{align}

Combine with \eqref{eq:sum_of_outer_consensus_estimation_error_2},
after taking expectations,
\eqref{eq:fadsfasdfasdfasd} can be further bounded as
\begin{align}
    \E \big\| \bs^{(t)} - \onet \bbs^{(t)} \big\|_2^2
    &< \alphao \E \big\| \bs^{(t-1)} - \onet\bbs^{(t-1)} \big\|_2^2
    + \frac{4 \alphao^2 L^2}{1 - \alphao} \E \big\| \x^{(t-1)} - \onet\bbx^{(t-1)} \big\|_2^2 \notag \\
    & \qquad + \frac{2 \alphao^2 S \eta^2 L^2}{1 - \alphao} \cdot n \sum_{s=0}^{S-1} \E \big\| \bbv^{(t-1), s} \big\|_2^2
    + \frac{4 \alphao^2 L^2}{1 - \alphao} \Bigg( \alphai \E \big\| \x^{(t-1)} - \onet\bbx^{(t-1)} \big\|_2^2 \notag \\
    &\qquad+ \frac{4 \alphai^2 \eta^2}{(1 - \alphai)^2} \Big( \E \big\| \bs^{(t-1)} - \onet\bbs^{(t-1)} \big\|_2^2 + \frac{2 \alphai^2 \eta^2 L^2}{(1 - \alphai) p^2} \cdot n \sum_{s=0}^{S-1} \E \big\| \bbv^{(t-1), s} \big\|_2^2 \Big)  \Bigg) \notag \\
    & = \Big( \alphao + \frac{4 \alphao^2 L^2}{1 - \alphao} \cdot \frac{4 \alphai^2 \eta^2}{(1 - \alphai)^2} \Big) \E \big\| \bs^{(t-1)} - \onet\bbs^{(t-1)} \big\|_2^2 \notag \\
    &\qquad + \frac{4 \alphao^2 L^2}{1 - \alphao} ( 1 + \alphai) \E \big\| \x^{(t-1)} - \onet\bbx^{(t-1)} \big\|_2^2 \notag \\
    & \qquad + \Big( \frac{2 \alphao^2 S \eta^2 L^2}{1 - \alphao}
    + \frac{4 \alphao^2 L^2}{1 - \alphao}
\cdot \frac{4 \alphai^2 \eta^2}{(1 - \alphai)^2} \cdot \frac{2  \alphai^2 \eta^2 L^2}{(1 - \alphai) p^2} \Big) \cdot n \sum_{s=0}^{S-1} \E \big\| \bbv^{(t-1), s} \big\|_2^2 \notag \\
    &\overset{\text{(i)}}{<} \Big( \alphao + \frac{4 \alphao^2 L^2}{1 - \alphao} \cdot \frac{4 \alphai^2 \eta^2}{(1 - \alphai)^2} \Big) \E \big\| \bs^{(t-1)} - \onet\bbs^{(t-1)} \big\|_2^2 \notag \\
    &\qquad + \frac{4 \alphao^2 L^2}{1 - \alphao} ( 1 + \alphai) \E \big\| \x^{(t-1)} - \onet\bbx^{(t-1)} \big\|_2^2 \notag \\
    & \qquad + \frac{3 \alphao^2 S \eta^2 L^2}{1 - \alphao}
     \cdot n \sum_{s=0}^{S-1} \E \big\| \bbv^{(t-1), s} \big\|_2^2,
    \label{eq:sum_of_outer_gradient_estimation_error_3}
\end{align}
where and (i) is obtained by applying the condition in \eqref{eq:step_size_condition} as follows
\begin{align*}
    \frac{4 \alphao^2 L^2}{1 - \alphao}
    \cdot \frac{4 \alphai^2 \eta^2}{(1 - \alphai)^2} \cdot \frac{2  \alphai^2 \eta^2 L^2}{(1 - \alphai) p^2}
    &= \frac{ \alphao^2 \eta^2  L^2}{1 - \alphao}
    \cdot \frac{32  \alphai^4 \eta^2 L^2}{(1 - \alphai)^3 p^2}  \\
    &\leq \frac{ \alphao^2 S \eta^2  L^2}{1 - \alphao}
    \cdot \frac{32\alphai^2 (1 - \alphai)^6}{100(1 - \alphai)^3} \\
    & \leq \frac{\alphao^2 S \eta^2 L^2}{1 - \alphao},
\end{align*}
where the inequalities are obtained by using $S \geq 1$ and $0 \leq \alphai < 1$.

\subsection{Linear system}%
\label{sub:linear_system}

Defining $\be^{(t)}:= \be^{(t),0}
= \begin{bmatrix}
    L^2 \E \big\| \x^{(t)} - \onet\bbx^{(t)} \big\|_2^2 \\[0.3em]
    \E \big\| \bs^{(t)} - \onet\bbs^{(t)} \big\|_2^2 \\
\end{bmatrix}$
and
$\bb'^{(t)} = \begin{bmatrix}
    \frac{8 \alphai^4 \eta^4 L^4}{(1 - \alphai)^3 p^2} \cdot n \sum_{s=0}^{S-1} \E \big\| \bbv^{(t), s} \big\|_2^2 \\[0.3em] 
    \frac{3 \alphao^2 S \eta^2 L^2}{ 1 - \alphao} \cdot n \sum_{s=0}^{S-1} \E \big\| \bbv^{(t), s} \big\|_2^2
\end{bmatrix}$, 
we construct a linear system by putting together \eqref{eq:sum_of_outer_consensus_estimation_error_2} and \eqref{eq:sum_of_outer_gradient_estimation_error_3} as
\begin{align}
    \be^{(t)}
   & \leq
  \underbrace{  \begin{bmatrix}
        \alphai & \frac{4 \alphai^2 \eta^2 L^2}{(1 - \alphai)^2} \\[0.5em]
        \frac{4 \alphao^2}{1 - \alphao} (1 + \alphai ) & \alphao + \frac{4 \alphao^2}{1 - \alphao} \cdot \frac{4 \alphai^2 \eta^2 L^2}{(1 - \alphai)^2}
    \end{bmatrix}}_{=: \bGO}
    \be^{(t-1)}
    +
    \bb'^{(t-1)}   = \bGO \be^{(t-1)}
    +
    \bb'^{(t-1)}  .
    \label{eq:linear_system_1}
\end{align}

Then, following the same argument as \eqref{eq:sum_of_inner_consensus_3}, we obtain
\begin{align}
    \sum_{t=0}^T \be^{(t)}
   & \leq \sum_{t=0}^\infty \bGO^t \Big( \be^{(0)}
    + \sum_{t=0}^{T-1} \bb'^{(t)} \Big) . \label{eq:sum_of_outer_consensus_sum_error}
\end{align}

Before continuing, we state the following claim about $\bGO$ which will be proven momentarily.
\begin{claim} \label{claim:spectral_GO}
Under the choice of $\eta$ in Theorem~\ref{theorem:network_sarah_non_convex}, the eigenvalues of $\bGO$ are in $(-1,1)$, and the Neumann series converges, 
\begin{align*}  
    \sum_{t=0}^\infty \bGO^t
    = (\bI_2 - \bGO)^{-1}
    & \leq 
    \begin{bmatrix}
        \frac{2}{1 - \alphai}
        & \frac{8 \alphai^2 \eta^2 L^2}{(1 - \alphai)^3 (1 - \alphao)} \\[0.5em]
        \frac{16 \alphao^2}{(1 - \alphai) (1 - \alphao)^2}
        & \frac{2}{1 - \alphao} \\
    \end{bmatrix}
     .
\end{align*}
\end{claim}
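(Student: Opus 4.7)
The plan is to mirror the proof of Claim~\ref{claim:spectral_GI} almost verbatim. The first step is to verify that the spectral radius of $\bGO$ is strictly below one, so that the Neumann series $\sum_{t\ge 0}\bGO^t$ converges to $(\bI_2-\bGO)^{-1}$. Because $\bGO$ is a $2\times 2$ nonnegative matrix with a positive off-diagonal, its characteristic polynomial $f_{\bGO}(\lambda)=\lambda^2-\mathrm{tr}(\bGO)\lambda+\det(\bGO)$ has real roots. I would show that both roots lie in $(-1,1)$ by checking $f_{\bGO}(1)>0$, $f_{\bGO}(-1)>0$, and $\mathrm{tr}(\bGO)<2$. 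A direct expansion of the $2\times 2$ determinant gives
\begin{equation*}
f_{\bGO}(1) = \det(\bI_2-\bGO) = (1-\alphai)(1-\alphao) - \frac{32\alphai^2\alphao^2\eta^2 L^2}{(1-\alphai)^2(1-\alphao)},
\end{equation*}
and invoking the third clause of \eqref{eq:step_size_condition} — equivalently $96\alphai^2\alphao^2\eta^2 L^2\le(1-\alphai)^3(1-\alphao)^2$ — forces the subtracted term to be at most $\tfrac{1}{3}(1-\alphai)(1-\alphao)$. This yields $\det(\bI_2-\bGO)\ge\tfrac{2}{3}(1-\alphai)(1-\alphao)>0$. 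The conditions $\mathrm{tr}(\bGO)<2$ and $f_{\bGO}(-1)>0$ follow from $\alphai,\alphao\in(0,1)$ together with the same step-size bound, concluding the convergence argument.

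Once convergence is secured, I would write the inverse in closed form via the $2\times 2$ cofactor formula,
\begin{equation*}
(\bI_2-\bGO)^{-1} = \frac{1}{\det(\bI_2-\bGO)}\begin{bmatrix} 1-\alphao-\frac{16\alphai^2\alphao^2\eta^2 L^2}{(1-\alphai)^2(1-\alphao)} & \frac{4\alphai^2\eta^2 L^2}{(1-\alphai)^2} \\[0.3em] \frac{4\alphao^2(1+\alphai)}{1-\alphao} & 1-\alphai \end{bmatrix},
\end{equation*}
and combine it with the determinant lower bound $\tfrac{2}{3}(1-\alphai)(1-\alphao)$ together with the elementary relaxations of dropping the nonnegative $c$-term in the top-left entry and using $1+\alphai\le 2$ in the bottom-left entry. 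These immediately yield the four claimed entrywise bounds $\tfrac{2}{1-\alphai}$, $\tfrac{8\alphai^2\eta^2 L^2}{(1-\alphai)^3(1-\alphao)}$, $\tfrac{16\alphao^2}{(1-\alphai)(1-\alphao)^2}$, and $\tfrac{2}{1-\alphao}$, since the numerical constants $2, 8, 16, 2$ comfortably absorb the factor $\tfrac{3}{2}$ introduced by the determinant bound.

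The main obstacle is identifying which of the three step-size conditions in \eqref{eq:step_size_condition} is strong enough for the determinant estimate: the first and second clauses individually do not control the cross product $\alphai^2\alphao^2\eta^2 L^2$ tightly enough, and only the third clause — with the $4\sqrt{6}\,\alphai\alphao$ denominator — delivers exactly the slack needed to absorb the constant $32$ appearing in the expansion. Beyond that single verification, the remainder of the argument is arithmetic bookkeeping directly analogous to the proof of Claim~\ref{claim:spectral_GI}.
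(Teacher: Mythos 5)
Your proposal is correct and follows essentially the same route as the paper: bound the roots of the characteristic polynomial using the third clause of \eqref{eq:step_size_condition} (which indeed gives $cd := \frac{16\alphai^2\alphao^2\eta^2L^2}{(1-\alphai)^2(1-\alphao)} \le \tfrac16(1-\alphai)(1-\alphao)$ and hence $\det(\bI_2-\bGO)\ge\tfrac23(1-\alphai)(1-\alphao)>0$), then write $(\bI_2-\bGO)^{-1}$ via the $2\times2$ cofactor formula and check the four entries, whose constants comfortably absorb the resulting factor. The only cosmetic difference is that you locate the eigenvalues in $(-1,1)$ via $\mathrm{tr}(\bGO)<2$ while the paper uses $f(\alphai)\le 0$; both are valid given $f(\pm1)>0$ and real roots.
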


With Claim~\ref{claim:spectral_GO} in hand,
and the fact that $\be^{(0)} = \bm0$,
we can bound the summation of outer loop consensus errors by
\begin{align}
    &~~~~ \frac{64 L^2}{1 - \alphai} \cdot \Big( \frac{S}{npb} + 1 \Big) \sum_{t=0}^T \E \big\| \bx^{(t)} - \onet\bbx^{(t)} \big\|_2^2 + \frac{2 \alphai^2}{1 - \alphai} \sum_{t=0}^T \E \big\| \bs^{(t)} - \onet\bbs^{(t)} \big\|_2^2 \notag \\
    &= \bm{\varsigma}_{\mathsf{out}}^\top \sum_{t=1}^T \be^{(t)} \notag \\
    &\leq \bm{\varsigma}_{\mathsf{out}}^\top  (\bI_2 - \bGO)^{-1} \Big( \be^{(0)}
    + \sum_{t=0}^{T-1} \bb'^{(t)} \Big) \notag \\
    &= \bm{\varsigma}_{\mathsf{out}}^\top  (\bI_2 - \bGO)^{-1}
    \sum_{t=0}^{T-1} \bb'^{(t)} ,
    \label{eq:proof_of_lemma:sum_of_outer_vars_1}
\end{align}
where $\bm{\varsigma}_{\mathsf{out}}^\top =
\begin{bmatrix}
\frac{64 }{1 - \alphai} \cdot \Big( \frac{S}{npb} + 1 \Big) & \frac{2 \alphai^2}{1 - \alphai}
\end{bmatrix}$.

Note that by elementary calculations,
\begin{align*}
    &~~~~ \bm{\varsigma}_{\mathsf{out}}^\top
    (\bI_2 - \bGO)^{-1} \\
    &\leq
    \begin{bmatrix}
        \frac{64}{1 - \alphai}  \Big( \frac{S}{npb} + 1 \Big) & 2 \alphai^2
    \end{bmatrix}
    \begin{bmatrix}
    \frac{2}{1 - \alphai}
    & \frac{8 \alphai^2 \eta^2 L^2}{(1 - \alphai)^3 (1 - \alphao)} \\[0.5em]
    \frac{16 \alphao^2}{(1 - \alphai) (1 - \alphao)^2}
    & \frac{2}{1 - \alphao} \\
    \end{bmatrix} \\
    &=
    \begin{bmatrix}
    \frac{64}{1 - \alphai}  \Big( \frac{S}{npb} + 1 \Big) \cdot \frac{2}{1 - \alphai} + \frac{32 \alphai^2 \alphao^2}{(1 - \alphai) (1 - \alphao)^2}
    & \frac{64}{1 - \alphai}  \Big( \frac{S}{npb} + 1 \Big) \cdot \frac{8 \alphai^2 \eta^2 L^2}{(1 - \alphai)^3 (1 - \alphao)} 
    + \frac{4 \alphai^2}{1 - \alphao}
    \end{bmatrix} \\
    &\overset{\text{(i)}}{<}
    \begin{bmatrix}
        \frac{128}{(1 - \alphai)^2}  \Big( \frac{S}{npb} + 1 \Big) + \frac{32 \alphai^2 \alphao^2}{(1 - \alphai) (1 - \alphao)^2}
    & 6 \alphai^2 + \frac{4 \alphai^2}{1 - \alphao}
    \end{bmatrix} \\
    &\overset{\text{(ii)}}{<}
    \begin{bmatrix}
        \frac{128}{(1 - \alphai)^2}  \Big( \frac{S}{npb} + 1 \Big) + \frac{32 \alphai^2 \alphao^2}{(1 - \alphai) (1 - \alphao)^2}
    & \frac{10 \alphai^2}{1 - \alphao}
    \end{bmatrix},
\end{align*}
where we use \eqref{eq:step_size_condition} to prove (i),
and $1 / (1 - \alphai) \geq 1$ and $1 / (1 - \alphao) \geq 1$ to prove (ii).

Thus, \eqref{eq:proof_of_lemma:sum_of_outer_vars_1} can be bounded using \eqref{eq:step_size_condition} as
\begin{align*}
    &~~~~ \bm{\varsigma}_{\mathsf{out}}^\top
    (\bI_2 - \bGO)^{-1}
    \begin{bmatrix}
    \frac{8 \alphai^4 \eta^4 L^4}{(1 - \alphai)^3 p^2} \\[0.3em] 
    \frac{3 \alphao^2 S \eta^2 L^2}{ 1 - \alphao}
    \end{bmatrix} \\
    &\leq \Bigg( \frac{128}{(1 - \alphai)^2}  \Big( \frac{S}{npb} + 1 \Big) + \frac{32 \alphai^2 \alphao^2}{(1 - \alphai) (1 - \alphao)^2} \Bigg) \frac{8 \alphai^4 \eta^4 L^4}{(1 - \alphai)^3 p^2}
    + \frac{10 \alphai^2}{1 - \alphao} \cdot \frac{3 \alphao^2 S \eta^2 L^2}{ 1 - \alphao} \\
    &= \frac{1024 \alphai^4 \eta^4 L^4}{(1 - \alphai)^5 p^2}  \Big( \frac{S}{npb} + 1 \Big)
    + \frac{256 \alphai^6 \alphao^2 \eta^4 L^4}{(1 - \alphai)^4 (1 - \alphao)^2 p^2}
    + \frac{30 \alphai^2 \alphao^2 npb \cdot S/(npb)}{(1 - \alphao)^2}
    \cdot \eta^2 L^2 \\
    &\leq
    11 \alphai^4 \eta^2 L^2
    + 3 \alphai^6 \alphao^2 \eta^2 L^2
    + \frac{30}{100} \\
    &< \frac{11}{25},
\end{align*}
which concludes the proof.

\begin{proof}[Proof of \Cref{claim:spectral_GO}]
For simplicity,
denote $c = \frac{4 \alphai^2 \eta^2 L^2}{(1 - \alphai)^2}$ and $d = \frac{4 \alphao^2}{1 - \alphao}$. Then
$\bGO$ can be written as
\begin{align*}
    \bGO
    =
    \begin{bmatrix}
        \alphai & c \\
        d (1 + \alphai ) & \alphao + cd
    \end{bmatrix} ,
\end{align*}
whose characteristic polynomial is
\begin{align*}
    f(\lambda)
    &= ( \alphai - \lambda ) (\alphao + cd - \lambda) - ( 1 + \alphai ) cd.
\end{align*}
First, note that $f(1)$ can be bounded by
\begin{align*}
    f(1)
    &= ( \alphai - 1 )(\alphao + cd - 1) - ( 1 + \alphai ) cd \\
    &= (1 - \alphai)(1 - \alphao) -2 cd > 0,
\end{align*}
where the last inequality is due to the choice of $\eta$, namely,
\begin{align*}
    cd
    &= \frac{4 \alphao^2}{1 - \alphao} \cdot \frac{4 \alphai^2 \eta^2 L^2}{(1 - \alphai)^2}      \leq \frac16 (1 - \alphai)(1 - \alphao) .
\end{align*}

Combined with the trivial fact that $f(-1) > 0$ and $f(\alphai) \leq 0$,
all eigenvalues of $\bGO$ are in $(-1, 1)$. Consequently,
the Neumann series converges, leading to
\begin{align*}
     \sum_{t=0}^\infty \bGO^t =   (\bI_2 - \bGO)^{-1}
    & =
    \begin{bmatrix}
    \frac{(1 - \alphai)^2 (1 - \alphao)^2 - 16 \alphai^2 \alphao^2 \eta^2 L^2}{(1 - \alphai)^3 (1 - \alphao)^2 - 32 \alphai^2 \alphao^2 \eta^2 L^2}
    & \frac{4 \alphai^2 (1 - \alphao) \eta^2 L^2}{(1 - \alphai)^3 (1 - \alphao)^2 - 32 \alphai^2 \alphao^2 \eta^2 L^2} \\[0.5em]
    \frac{4 (1 - \alphai)^2 (1 + \alphai) \alphao^2}{(1 - \alphai)^3 (1 - \alphao)^2 - 32 \alphai^2 \alphao^2 \eta^2 L^2}
    & \frac{(1 - \alphai)^3 (1 - \alphao)}{(1 - \alphai)^3 (1 - \alphao)^2 - 32 \alphai^2 \alphao^2 \eta^2 L^2} 
    \end{bmatrix} \\
    &\leq
    \begin{bmatrix}
    \frac{2}{1 - \alphai}
    & \frac{8 \alphai^2 \eta^2 L^2}{(1 - \alphai)^3 (1 - \alphao)} \\[0.5em]
    \frac{16 \alphao^2}{(1 - \alphai) (1 - \alphao)^2}
    & \frac{2}{1 - \alphao} \\
    \end{bmatrix},
\end{align*}
where we use the condition in \eqref{eq:step_size_condition} to prove
$ 32 \alphai^2 \alphao^2 \eta^2 L^2
    \leq \frac{32}{100} (1 - \alphai)^6(1 - \alphao)^2
    < \frac12 (1 - \alphai)^3(1 - \alphao)^2
$
to bound the denominator.

\end{proof}

\end{document}